 \newtheorem{assumption}[theorem]{Assumption}
\newcommand{\E}{\mathbb{E}}
\newcommand{\R}{\mathbb{R}}
\newcommand{\Prob}{\mathbb{P}}
\newcommand{\indep}{\perp \!\!\! \perp}
\newcommand{\bE}{\mathbb{E}}
\newcommand{\Cov}{{\rm Cov}}
\newcommand{\1}{{\mathbbm{1}}}
\newcommand{\bN}{\mathbb{N}}
\newcommand{\bP}{\mathbb{P}}
\newcommand{\bR}{{\mathbb R}}
\newcommand{\sO}{{\mathscr{O}}}
\renewcommand{\cal}{\mathcal}
\newcommand\cA{{\mathcal A}}
\newcommand{\cC}{{\cal C}}
\newcommand{\Cint}{\mathcal{I}}
\newcommand{\cY}{{\cal Y}}
\newcommand{\cD}{{\cal D}}
\newcommand{\cG}{{\cal G}}
\newcommand\cH{{\mathcal H}}
\newcommand{\cZ}{{\cal Z}}
\newcommand{\cL}{{\cal L}}
\newcommand{\cN}{{\cal N}}
\newcommand\cO{{\mathcal O}}
\newcommand{\cP}{{\cal P}}
\newcommand{\cQ}{{\cal Q}}
\newcommand{\cU}{{\mathcal U}}
\newcommand\cW{{\mathcal W}}
\newcommand{\cX}{{\mathcal X}}
\renewcommand{\1}{\mathbbm{1}}	%
\DeclareMathOperator*{\argmin}{arg\,min}
\newcommand{\remove}[1]{}
\newcommand{\GMC}{$s$-Happy Multi-calibration}
\newcommand{\gmc}{$s$-happily multicalibrated}
\newcommand{\hmd}{happily multicalibrated}
\newcommand{\HM}{\textit{HappyMap}}
\renewcommand{\HM}{HappyMap}
\title{\HM : A Generalized Multicalibration Method}% \\ {\large Sufficient Conditions and Applications}}
\author{Zhun Deng}{Department of Computer Science, Columbia University, USA \and \url{https://www.zhundeng.org} }{zhundeng@g.harvard.edu}{}{}%{Supported by the Sloan Foundation grants, the NSF grant 1763665, and the Simons Foundation Collaboration on the Theory of Algorithmic Fairness.}%TODO mandatory, please use full name; only 1 author per \author macro; first two parameters are mandatory, other parameters can be empty. Please provide at least the name of the affiliation and the country. The full address is optional. Use additional curly braces to indicate the correct name splitting when the last name consists of multiple name parts.
\author{Cynthia Dwork}{John A. Paulson School of Engineering and Science, Harvard University, USA  \and \url{https://dwork.seas.harvard.edu/}}{dwork@seas.harvard.edu
}{}{}%{Supported by Alfred P. Sloan Foundation G-2020-13941.}
\author{Linjun Zhang \thanks{Author names are listed in alphabetical order.}}{Department of Statistics, Rutgers University, USA  \and \url{https://linjunz.github.io/}}{linjun.zhang@rutgers.edu
}{https://orcid.org/0000-0002-8309-7164}{}%{Supported by NSF DMS-2015378.}
\authorrunning{Z. Deng, C. Dwork, and L. Zhang } %TODO mandatory. First: Use abbreviated first/middle names. Second (only in severe cases): Use first author plus 'et al.'
\keywords{algorithmic fairness, target-independent learning, transfer learning} %TODO mandatory; please add comma-separated list of keywords
\begin{document}

\maketitle

%TODO mandatory: add short abstract of the document
\begin{abstract}
Multi-calibration is a powerful and evolving concept originating in the field of algorithmic fairness. For a predictor $f$ that estimates the outcome $y$ given covariates $x$, and for a function class $\cC$, multi-calibration requires that the predictor $f(x)$ and outcome $y$ are indistinguishable under the class of auditors in $\cC$.  Fairness is captured by incorporating demographic subgroups into the class of functions~$\cC$. Recent work has shown that, by enriching the class $\cC$ to incorporate appropriate propensity re-weighting functions, multi-calibration also yields target-independent learning, wherein a model trained on a source domain performs well on unseen, future, target domains {(approximately) captured by the re-weightings.}
 
 Formally, multi-calibration with respect to $\cC$ bounds
$  \big|\bE_{(x,y)\sim \cD}[c(f(x),x)\cdot(f(x)-y)]\big|$  for all $c \in \cC$.
 In this work, we view the term $(f(x)-y)$ as just one specific mapping, and explore the power of an enriched class of mappings. We propose \textit{\GMC}, a generalization of multi-calibration, which yields a wide range of new applications, including a new fairness notion for uncertainty quantification (conformal prediction),
a novel technique for conformal prediction under covariate shift, and a different approach to
analyzing missing data, while also yielding a unified understanding of several existing seemingly disparate algorithmic fairness notions and target-independent learning approaches. 

We give a single \textit{HappyMap} meta-algorithm that captures all these results, together with a sufficiency condition for its success. 
\end{abstract}

\section{Introduction}

Prediction algorithms score individuals or individual instances, assigning to each a score in~$[0,1]$ typically interpreted as a probability, for example, the probability that it will rain tomorrow.  The predictions are calibrated if, for all $v \in [0,1]$, among those instances assigned the value~$v$, a $v$ fraction have a positive outcome. Calibration has been viewed as the {\it sine qua non} of prediction for  decades~\cite{dawid1982well}.  

The requirement that a predictor be simultaneously calibrated on each of two or more disjoint groups, meaning, it is calibrated on each group when viewed in isolation, was first proposed as a {\em fairness} condition by Kleinberg, Mullainathan, and Raghavan~\cite{KMR}.  Inspired by~\cite{KMR} and in an attempt to bridge the gap between {\em individual fairness}~\cite{FtA}, which demands that similar individuals be treated similarly but requires task-specific measures of similarity, and {\em group fairness}, which can be specious~\cite{FtA}, H\'{e}bert-Johnson, Kim, Reingold, and Rothblum proposed {\em multicalibration}, which requires calibration on a (possibly large) pre-specified collection of {\em arbitrarily intersecting} sets that can be identified within a specified class of computations~\cite{hebert2018multicalibration}. A related, independent, work of Kearns, Neel, Roth, and Wu considered the analogous setting but for Boolean-valued classifiers. {They argued that including intersecting groups can prevent fairness gerrymandering, and developed  multi-parity~\cite{kearns2018preventing}.}

The area has blossomed in theory and in application.  For example, multicalibration has been used for fair ranking~\cite{DKRRY2019}, and for providing an indistinguishability-based interpretation of individual probabilibilities, {\it i.e.}, probabilities for non-repeatable events~\cite{DKRRY2021}.  Multicalibration has also been shown to yield {\em omniprediction}, meaning that for every ``nice" loss function $\ell$, the scores assigned by the multicalibrated predictor can be post-processed, with no additional training, to be competitive, on the training distribution, with the best predictor in~$\cC$~\cite{gopalan2021omnipredictors}.  
\begin{definition}[Multicalibration \cite{hebert2018multicalibration} as presented in \cite{kim2022universal}]
\label{def:multicalibration}
Let $\cC\subseteq\{[0,1]\times \cX\rightarrow \bR\}$ be a collection of functions. For a given distribution supported on $\cX\times \cY$, a predictor $f:\cX\mapsto[0,1]$ is $(\cC,\alpha)$-multicalibrated over $\cD$ if $\forall c \in \cC$:
\begin{equation}\label{eq:1 intro}
  \Big|\bE_{(x,y)\sim \cD}[c(f(x),x)\cdot(f(x)-y)]\Big|\le \alpha.
\end{equation}
\end{definition}
Fairness is captured by incorporating demographic subgroups into the class of functions~$\cC$. By enriching the class $\cC$ to incorporate appropriate propensity re-weighting functions, multicalibration also yields target-independent learning, wherein a model trained on a source domain performs well on unseen, future, target domains {(approximately) captured by the re-weightings.}~\cite{kim2022universal}.

In this work, we view the term $({f}(x)-y)$ in Equation~\eqref{eq:1 intro} as just one specific mapping $s(f,y): \mathbb{R} \times \mathcal{Y} \rightarrow \mathbb{R}$, and explore the power of an enriched class of mappings. To this end, we propose \textit{HappyMap}, a generalization of multicalibration, which yields a wide range of new applications, including a new algorithm for fair uncertainty quantification,
a novel technique for conformal prediction under distributional (a.k.a. covariate) shift, and a different approach to analyzing missing data, while also yielding a unified understanding of several existing seemingly disparate algorithmic fairness notions and target-independent learning approaches. 

We give a single \textit{HappyMap} meta-algorithm that captures all these results, together with a sufficiency condition for its success. Roughly speaking, the requirement is that the mapping have an anti-derivative satisfying a smoothness-like assumption (see Section~\ref{sec:notion}).  We say such a mapping is {\em happy}.  Loosely speaking, the anti-derivative serves as a potential function, which yields an upper bound on the number of iterations of our algorithm.

\medskip
\noindent
{\bf Summary of Contributions.}

\begin{enumerate}
\item  We propose \textit{HappyMap}, a generalization of multicalibration, by enriching the class of mappings (alternatives for the term $(f(x)-y)$ in Equation~\ref{eq:1}), as discussed above, and provide a \textit{HappyMap} meta-algorithm having comparable running time and sample complexity to the other multicalibration algorithms in the literature, and give sufficient conditions for its success (Section~\ref{sec:notion}). 
\item We demonstrate the flexibility of \textit{HappyMap} by first applying it to obtain generalized versions of {\em fair uncertainty quantification}~\cite{romano2019malice} (Section~\ref{sec:application}). 
\item Furthermore, we apply \textit{HappyMap} to problems in {\em target-independent} learning that lie beyond the statistical estimation problems considered in \cite{kim2022universal}, obtaining target-independent statistical inference and uncertainty quantification.  Our approach also yields a fruitful new perspective on analyzing missing data, giving new solutions to this problem (Section~\ref{sec:ua}).

\end{enumerate}

\noindent{\bf A High-Level Perspective.}  The seminal work of H\'{e}bert-Johnson {\it et al.} \cite{hebert2018multicalibration} has long tantalized us with the suggestion of a new paradigm for machine learning.  We coin the term {\em micro-learning} to describe gradient descent, a learning paradigm paradigm based on loss minimization, consisting of a sequence of local model updates.  In contrast, the multiaccuracy and multicalibration algorithms of~\cite{hebert2018multicalibration} produce predictors via interactions (through weak agnostic learning) with auditors who find large problem areas without the intercession of a loss function, and make correspondingly large model updates, suggested to us what we call {\em macro-learning}. This perspective has a compelling transparency story, \textit{``the learning algorithm finds large errors and fixes them"}, which is particularly appealing when the auditors are interrogating the treatment of demographic subgroups.  The fact that macro-learning can be used as post-processing~\cite{hebert2018multicalibration} tells us that the two paradigms can work in concert.
The concept of \GMC{} proposed in this paper advances the broad vision of macro-learning.  

\section{Preliminaries}

\subsection{Notation}

For $d\in\bN+$, any convex set $S\subseteq \bR^d$ and vector $v\in\bR^d$, we use $\Pi_S(v)$ to denote the the projection of $v$ on $S$ under Euclidean distance. We sometimes use the probability notion $\bP$ also for density function, for instance $\bP(x)$ means the density function evaluated at $x$ for continuous random vector $x$.  Let us denote $X\in \mathcal X$ as the feature vector (typically $\mathcal X=\R^d$), $Y\in\mathcal Y$ (typically $\mathcal Y=\R$ for regression problems and $\mathcal Y=\{0,1\}$ for classification problems) as the response that we are trying to predict. For a joint distribution of $(x,y)\in\mathcal X\times\mathcal Y$, let us denote the marginal distribution of $x$ and $y$ as $\cD^{\mathcal X}$ and $\cD^{\mathcal Y}$ respectively.
 For two positive sequences $\{a_k\}$ and $\{b_k\}$, we write $a_k =\cO(b_k)$ (or $a_n\lesssim b_n$), and $a_k = o(b_k)$, if $\lim_{k\rightarrow\infty}(a_k/b_k) < \infty$ and $\lim_{k\rightarrow\infty}(a_k/b_k) =0$ respectively. $\tilde {\cO}(\cdot)$ denotes the term, neglecting the logarithmic factors. We also write $a_k=\Theta(b_k)$ (or $a_k\asymp b_k$) if $a_k\lesssim b_k$ and $b_k\lesssim a_k$. We use $\cO_p(\cdot)$ to denote stochastic boundedness: a random variable $Z_k=\cO_p(a_k)$ for some real sequence $a_k$ if $\forall \epsilon>0$, there exists $M,N>0$ such that if $k>N$, $\bP(|Z_k/a_k|>M)\le \epsilon$. For two numbers $a<b$, we use the notation $U(a,b)$ to denote the uniform distribution on $[a,b]$. For $\mu\in\R$ and $\sigma>0$, we use $N(\mu,\sigma^2)$ to denote a normal distribution with mean $\mu$ and variance $\sigma^2$.

\section{\GMC}\label{sec:notion}

We now formally state our generalization of multicalibration.
\begin{definition}[\GMC ]\label{def:GMC}
 Let $\cC\subseteq\{\bR\times \cX\rightarrow \bR\}$ be a collection of functions. For a given distribution supported on $\cX\times \cY$ and a mapping $s:\bR\times\cX\mapsto\bR$, a predictor $f:\cX\mapsto \bR$ is $(\cC,\alpha)$-\gmc~over $\cD$ and $s$ if for all $c\in \cC$:
\begin{equation}\label{eq:1}
  \Big|\bE_{(x,y)\sim \cD}[c(f(x),x)\cdot s(f(x),y)]\Big|\le \alpha.
\end{equation}
\end{definition}
\noindent
When $\cD$ and $s$ are clear from the context, we will also simply say $f$ is $(\cC,\alpha)$-\hmd. Sometimes we will constrain the range of $f$ to a certain convex set $O$, for example, $[0,1]$; in this case we say $f:\cX\mapsto O$ is $(C,\alpha)$-\gmc{} when it satisfies Eq.~\eqref{eq:1}. When $c(f(x),x)$ is independent of $f$ and only depends on $x$, we will simply write $c(x)$.

\GMC{} (Equation~\eqref{eq:1}) captures several multi-group fairness notions in the literature. For example,  if we let $c(f(x),x)$ be be independent of $f$, and take $s(f(x),y)=f(x)-y$, we recover multi-accuracy \cite{kim2019multiaccuracy,hebert2018multicalibration};  
if we take $c(f(x),x)=\tilde c(x)\cdot w(f(x))$ for some functions $\tilde c$ and $w$ and $s(f(x),y)=f(x)-y$, we recover the low-degree multicalibration notion in~\cite{gopalan2022low}; if we take $s(f(x),y)$ to be a non-negative loss function and $c(x)$ to be indicator functions of different groups, we recover the minimax group fairness in \cite{diana2021minimax}. 

\noindent{\bf The \textit{HappyMap} Meta-Algorithm.} We now describe the \textit{HappyMap} meta-algorithm and prove its key properties.  For simplicity, we describe the \textit{population} version of our algorithm, where we are allowed to access $\bE_{(x,y)\sim \cD}[c(f(x),x)s(f(x),y)]$. 
Of course, in practice, we will need to estimate these quantities from training data, and so we describe the sample version of Algorithm~\ref{alg:gmc} in Appendix~\ref{app:sample version} and derive the corresponding required sample complexity. In keeping with the multi-group fairness literature \cite{hebert2018multicalibration,kim2019multiaccuracy}, we can either use a fresh sample per iteration or, when samples are limited, apply techniques from adaptive data analysis \cite{dwork2015generalization,dwork2015reusable, dwork2015preserving} to re-use the same samples in each iteration, as suggested in~\cite{hebert2018multicalibration}. 

 We consider the general case and aim to return a $f:\cX\mapsto O$, for a given convex set $O$, which is $(\cC,\alpha)$-\hmd . Without loss of generality, let us assume the class $\cC$ is symmetric in the sense that $c$ and $-c$ are both in $\cC$ (we can always augment $\tilde \cC$ by including $-c$ for $c\in\tilde\cC$, so this is without loss of generality).
The algorithm is invoked with an initial predictor $f_0$, which can be  trivial ($f_0(x)=c$ for all $x \in \cX$) or can be an artisanal predictor imbued with extensive domain knowledge.

\begin{algorithm}[H]{
\caption{HappyMap}\label{alg:gmc}
\KwIn{{Tolerance $\alpha>0$,} step size $\eta>0$, bound $T\in\bN_+$ on number of iterations, initial predictor $f_0(\cdot)$, distribution $\cD$, convex set $O\subseteq \bR$, mapping $s:\bR \times {\mathcal Y}\rightarrow \bR$}
}
Set $t=0$

\While {$t < T$ ~and $~ \exists c_t\in\cC: \bE_{(x,y)\sim \cD}[c_t(f_t(x),x)s(f_t(x),y)]> \alpha$}
{
\vspace{0.05in}
Let $c_t$ be an arbitrary element of $\cC$ satisfying the condition in the while statement

$\forall x\in \cX$,~$f_{t+1}(x)=\Pi_O\big[f_t(x)-\eta\cdot c_t(f(x),x)\big]$ 
\\
$t=t+1$}

\KwOut{$f_t(\cdot)$}
\end{algorithm}

{%\blue 
As in previous work \cite{hebert2018multicalibration,romano2019malice,kim2022universal,kearns2018preventing}, we do not address the complexity of the weak  learner whose job it is to search for functions $c_t \in \cC$ satisfying the condition of the while loop, or to report that none exists.  The problem is at least as hard as agnostic learning~\cite{hebert2018multicalibration,kearns2018preventing}. See~\cite{burhanpurkar2021scaffolding} for discussion of this issue and its implications for fairness.} {Henceforth, our unit of  computational complexity will be an invocation of the weak learner, {\it i.e.}, an iteration of the while loop.}

\noindent{\bf Analysis.} Theorem~\ref{thm:gmc} below summarizes the theoretical guarantees for Algorithm~\ref{alg:gmc}.
As is common in the literature, the proof of termination relies on a potential function argument (see, {\it e.g.}, \cite{bubeck2015convex,hebert2018multicalibration}). 
The key new ingredient is the notion of a {\em happy} mapping, which provides the conditions under which we can find the necessary potential function.

We will require some assumptions, which we state and discuss before stating the theorem.

\noindent{\bf Three Assumptions.}
(a) For all $c \in \cC, x\in\cX$, $\bE_{x\sim \cD^\cX}[c^2(f(x),x)]\le B$, where $\cD^\cX$ is the marginal distribution of $\cD$ on $x$; (b) There exists a potential function $\cL:\bR\times\cY\mapsto \bR$, constants $C^l_\cL, C^u_\cL$, positive constant $\kappa_\cL>0$, such that (1) for all $f,\tilde{f}:\cX\mapsto \bR$, $C^l_\cL\le\E_{(x,y)\sim\cD}[\cL(f(x),y)]$, (2)  $C^l_\cU\ge\E_{(x,y)\sim\cD}[\cL(f_0(x),y)]$ for the initialization $f_0$, and (3) $\E_{(x,y)\sim\cD}[\cL(f(x),y)-\cL(\tilde{f}(x),y)]\ge \E_{(x,y)\sim\cD}[(f(x)-\tilde{f}(x))s(f(x),y)]-\kappa_\cL\E_{x\sim\cD^\cX}[(f(x)-\tilde{f}(x))^2]$; (c) For all $y\in \cY$ and $v\in\bR$, $\cL(\Pi_{O}[v],y)\le \cL(v,y)$. 

Assumption (a) is routine and
Assumption (c) says that the potential function decreases upon projection with respect to its first coordinate. One concrete example is the case in which $O=\cY=[0,1]$ and $\cL(v,y)=|v-y|^2$. 

We now turn to Assumption~(b), focusing on \textit{how to construct the potential function $\cL$}. First note that the assumption is closely related to {\em smoothness}, a widely used concept in optimization. We use the following fact.

\noindent\textbf{Fact.} 
If $\cL(\cdot,\cdot):\bR\times\cX\mapsto \bR$ is $2\kappa_\cL$-smooth with respect to its first coordinate, i.e. $\cL(\cdot,\cdot)$ is continuously differentiable with respect to its first coordinate, and the corresponding partial derivative is $2\kappa_\cL$-Lipchitz, then, for all $x\in\cX, y\in\cY$ and $f,\tilde{f}:\cX\mapsto \bR$,
\begin{equation}\label{eq:smoothness}
    \cL(f(x),y)-\cL(\tilde{f}(x),y)\ge \partial_u \cL(u,y)|_{u=f(x)}(f(x)-\tilde{f}(x))-\kappa_\cL(f(x)-\tilde{f}(x))^2.
    \end{equation}
\noindent
Thus, if $s(f(x),y)$ is differentiable with respect to its first coordinate and $|\partial_us(u,y)|\le \kappa_\cL$, we can take the potential function to be the anti-derivative of $s$, specifically, $\cL(f(x),y)=\int_{g}^{f(x)}s(u,y)du$ for any $g\in \bR$, as long as we can ensure $\int_{g}^{f(x)}s(u,y)du\ge C^l_\cL$ for all $f(x)$ and $y$. Then, taking expectation over both sides of this equation and \eqref{eq:smoothness}, we can satisfy (b). In fact, even when the function $s(u,y)$ is not differentiable everywhere with respect to $u$, we can still follow the general idea above to construct $\cL$, and assumption (b) can still be satisfied with respect to the expectation of $\cL$. For example, in Section~\ref{subsec:conformal}, when $s(u,y)=\1\{u\le y\}-(1-\delta)$, taking $\cL(f(x),y)=(1-\delta)\cdot f(x)-\min(f(x)-y,0)$ will satisfy the desired condition.

 \begin{theorem}\label{thm:gmc}
Under Assumptions (a)-(c) above, and if $\cC$ is symmetric\footnote{Recall that symmetry is without loss of generality.}, then  Algorithm~\ref{alg:gmc} with a suitably chosen $\eta=\cO(\alpha/(\kappa_\cL B))$ converges in $T=\cO((C^u_\cL-C^l_\cL) \kappa_\cL B/\alpha^2)$ iterations and outputs a function $f$ satisfying $$\Big|\bE_{(x,y)\sim \cD}[c(f(x),x)s(f(x),y)]\Big|\le\alpha.$$
\end{theorem}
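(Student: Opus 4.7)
The natural route is a potential-function argument in the spirit of~\cite{hebert2018multicalibration}, with the $\cL$ of Assumption~(b) playing the role of the potential. Define
\[
\Phi_t := \E_{(x,y)\sim\cD}[\cL(f_t(x),y)].
\]
I will bound the per-iteration decrease $\Phi_t - \Phi_{t+1}$ from below by a quantity depending on $\eta$, $\alpha$, $\kappa_\cL$, and $B$, optimize $\eta$ to make this decrease as large as possible, and then use $C^l_\cL \le \Phi_t \le \Phi_0 \le C^u_\cL$ (from Assumption~(b)) to cap the number of iterations the while loop can sustain.

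\textbf{Per-iteration decrease.} Fix an iteration $t$ at which the while condition fires with witness $c_t$; using that $\cC$ is symmetric, I may assume (after replacing $c_t$ by $-c_t$ if necessary) that $\E[c_t(f_t(x),x)\, s(f_t(x),y)] > \alpha$. Let $\tilde{f}_{t+1}(x) := f_t(x) - \eta\, c_t(f_t(x),x)$ denote the pre-projection update, so $f_{t+1} = \Pi_O[\tilde{f}_{t+1}]$. First, Assumption~(c) applied pointwise and then integrated gives $\Phi_{t+1} \le \E[\cL(\tilde{f}_{t+1}(x),y)]$, i.e.\ the projection never undoes progress in the potential. Next, Assumption~(b) applied with $f := f_t$ and $\tilde{f} := \tilde{f}_{t+1}$ (so that $f_t - \tilde{f}_{t+1} = \eta\, c_t$) yields
\[
\E[\cL(\tilde{f}_{t+1}(x),y)] \;\le\; \Phi_t \;-\; \eta\,\E[c_t(f_t(x),x)\, s(f_t(x),y)] \;+\; \kappa_\cL \eta^2\, \E[c_t^2(f_t(x),x)].
\]
Combining this with the while-loop lower bound $\E[c_t\, s] > \alpha$ and the Assumption~(a) bound $\E[c_t^2] \le B$, I obtain $\Phi_{t+1} \le \Phi_t - \eta\alpha + \kappa_\cL \eta^2 B$.

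\textbf{Tuning $\eta$ and counting iterations.} Minimizing the right-hand side over $\eta$ gives the choice $\eta = \alpha/(2\kappa_\cL B)$, for which $\Phi_t - \Phi_{t+1} \ge \alpha^2/(4\kappa_\cL B)$ in every active iteration. Since $\Phi_0 \le C^u_\cL$ and $\Phi_t \ge C^l_\cL$ for all $t$, the while loop cannot persist for more than $4(C^u_\cL - C^l_\cL)\kappa_\cL B/\alpha^2$ rounds; choosing $T$ to be this value forces the algorithm to exit because the while condition fails. At exit, for every $c \in \cC$ we have $\E[c(f(x),x)\, s(f(x),y)] \le \alpha$, and applying this to both $c$ and $-c \in \cC$ (symmetry) gives the two-sided bound $|\E[c(f(x),x)\, s(f(x),y)]| \le \alpha$ claimed.

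\textbf{Anticipated obstacles.} The argument is essentially projected gradient descent in function space with $\cL$ as the objective, so the conceptual content lies entirely in the three assumptions: Assumption~(b) provides the smoothness-type inequality that links a gradient-like step in $c$ to a decrease in $\cL$; Assumption~(c) handles the projection; Assumption~(a) controls the quadratic error term. The only delicate step is verifying that the cross-term identification in Assumption~(b) goes through with the update direction $c_t(f_t(x),x)$ (which depends on $f_t$, not only on $x$); this is why Assumption~(b) is stated with the full pointwise difference $f(x) - \tilde{f}(x)$ multiplying $s(f(x),y)$ rather than with a gradient of $\cL$ evaluated at $f$. Beyond this, no additional ingredients are needed.
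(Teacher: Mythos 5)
Your proof is correct and follows essentially the same route as the paper's: applying Assumption~(c) to control the projection, then Assumption~(b) with $f = f_t$, $\tilde f = f_t - \eta c_t$ to get the per-iteration potential drop, tuning $\eta = \alpha/(2\kappa_\cL B)$, and counting iterations via the bounds $C^l_\cL \le \Phi_t \le \Phi_0 \le C^u_\cL$, with symmetry of $\cC$ giving the two-sided bound at termination. (Minor note: your per-iteration bound $\alpha^2/(4\kappa_\cL B)$ is the correct arithmetic for this choice of $\eta$; the paper writes $\alpha^2/(2\kappa_\cL B)$, a harmless slip that does not affect the $\cO(\cdot)$ statement.)
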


\begin{proof}[Proof of Theorem~\ref{thm:gmc}]
First, since the class $\cC$ is symmetric, that means as long as we can prove for all $c\in\cC$, $$\bE_{(x,y)\sim \cD}[c(f_t(x),x)s(f_t(x),y)]\le\alpha,$$
for the output $f_t(\cdot)$ of Algorithm~\ref{alg:gmc}, we will also have for all $c\in\cC$, $$\Big|\bE_{(x,y)\sim \cD}[c(f_t(x),x)s(f_t(x),y)]\Big|\le\alpha.$$

By our assumption, there exists a potential function $\cL$, a constant $C_\cL$, and a non-negative constant $\kappa_\cL$, such that for any $x\in\cX$, 
\begin{align*}
 \bE_{(x,y)\sim\cD}\cL(f_t(x),y)-\bE_{(x,y)\sim\cD}\cL(f_{t+1}(x),y)&\ge \bE_{(x,y)\sim\cD}(f_t(x)-f_{t+1}(x))s(f_t(x),y)\\
 &-\bE_{(x,y)\sim\cD}\kappa_\cL(f_t(x)-f_{t+1}(x))^2.   
\end{align*}

As a result, we have
\begin{align*}
&\bE_{(x,y)\sim\cD}\cL(f_t(x),y)-\bE_{(x,y)\sim\cD}\cL(f_{t+1}(x),y)\\
\ge& \bE_{(x,y)\sim\cD}\cL(f_t(x),y)-\bE_{(x,y)\sim\cD}\cL\big(f_{t}(x)-\eta c_t(f_t(x),x),y\big) \\
\ge& \bE_{(x,y)\sim\cD}\eta c_t(f_t(x),x)\cdot s(f_t(x),y)-\kappa_\cL\bE_{(x,y)\sim\cD}(\eta c_t(f_t(x),x))^2.
\end{align*}
The first inequality is because of (c) in  our assumption and the second inequality is because of (b) in our assumption.

Given $\bE_{x\sim \cD_\cX}[c^2(f(x),x)]\le B$, if there exists $c_t\in\cC$, $\bE_{(x,y)\sim \cD}[c_t(f_t(x),x)s(f_t(x),y)]>\alpha$
\begin{align*}
&\bE_{(x,y)\sim \cD}\cL(f_t(x),y)-\bE_{(x,y)\sim \cD}\cL(f_{t+1}(x),y)\\
\ge &\eta\bE_{(x,y)\sim \cD} c_t(f_t(x),x)\cdot s(f_t(x),y)
-\kappa_\cL\bE_{(x,y)\sim \cD}(\eta c_t(f_t(x),x))^2\ge \eta \alpha - \kappa_\cL\eta^2B.
\end{align*}

Take $\eta=\alpha/(2\kappa_\cL B)$, we have 
$$\bE_{(x,y)\sim \cD}\cL(f_t(x),y)-\bE_{(x,y)\sim \cD}\cL(f_{t+1}(x),y)\ge \frac{\alpha^2}{2\kappa_\cL B}.$$

Since $C^u_\cL\ge\bE_{(x,y)\sim\cD}\cL(f_0(x),y)$ and $\bE_{(x,y)\sim\cD}\cL(f(x),y\ge C^l_\cL$ for all $f$ by assumption, each update will result in a progress at least $\frac{\alpha^2}{2\kappa_\cL B}$ for each iteration if it happens, we know there are at most $2\kappa_\cL B(C^u_\cL-C^l_\cL)/\alpha^2$ updates.
\end{proof}

In the following sections, we will always assume $\cC$ is symmetric. Also, for simplicity, we assume $\cC$ is closed with respect to $L_2$-norm.

\section{Application: Algorithmic Fairness in Prediction Intervals}\label{sec:application}

{\em Conformal prediction} is a popular approach to uncertainty quantification in prediction models. Continuing in this vein, Romano {\it et al.} proposed a new group fairness criterion, {\em equalized coverage}~\cite{romano2019malice}, in which the goal is to construct a prediction interval $\Cint(x)$ that covers $y$ with comparable probability across all protected groups of interest.  More precisely, given a collection of disjoint protected demographic groups $\mathcal A\subset 2^\mathcal X$, the set-valued function $\Cint: \cX \rightarrow \bR$ provides {\em equalized coverage} if
\begin{equation}\label{def:equalized coverage: orginal}
\Prob(y\in \Cint(x)\mid x\in A)\ge 1-\delta, \text{ for all } A\in\mathcal A.
\end{equation}

In this section, we focus for simplicity on {\em one-sided prediction intervals}; by applying our results twice, or to a non-conformity score, we achieve two-sided intervals (see Corollary~\ref{rem:twice} and Remark~\ref{rem:score} below). Moreover, as usual in the multicalibration literature, our results hold even for the case of arbitrarily intersecting population subgroups.  A somewhat different version of the intersectional case was also studied in~\cite{gupta2021online, jung2022batch}; later, we will explain how~\HM{} can be used for this as well.

In the one-sided version of \eqref{def:equalized coverage: orginal}, we let $\Cint(x)=(l_{\delta}(x),\infty)$ be a one-sided $(1-\delta)$-prediction interval, and study the following coverage criterion:
\begin{equation}\label{def:equalized coverage: ours}
\Prob(x\in A)\cdot |\Prob(y\ge l_{\delta}(x)\mid x\in A)- (1-\delta)|\le \alpha, \text{ for all } A\in\mathcal A,
\end{equation}
which can then be rewritten as $$
\left|\E[\1(x\in A)\cdot\left(\1(y\ge l_{\delta}(x))- (1-\delta)\right)]\right|\le\alpha, \text{ for all } A\in\mathcal A.
$$
{ 
A few remarks about the problem definition are in order.
First, there is a trivial deterministic solution to the original equalized coverage problem in Equation~\eqref{def:equalized coverage: orginal}, whether or not the groups are disjoint: just set
$\Cint(x)=\R$.\footnote{This does not mean that previous algorithms are trivial!}
Similarly, while our (one- or two-sided) version in Equation~\eqref{def:equalized coverage: ours} rules this out, it admits the trivial randomized solution in which we take  $l_\delta(x)=-\infty$ with probability $1-\delta$, and $l_\delta(x)=\infty$ with probability $\delta$. 
One approach to ruling out both trivial solutions is to require a stronger condition, {\em multivalidity}, proposed in~\cite{gupta2021online,jung2022batch}.
Adopting our notation, instead of asking for a small $|\Prob(y\ge l_{\delta}(x)\mid x\in A)- (1-\delta)|$ as in  Eq.~\eqref{eq:conformal:extend}, \textit{multivalidity} requires $|\Prob(y\ge l_{\delta}(x)\mid x\in A, l_{\delta}(x))- (1-\delta)|$ to be small. Analogous to the relationship between multicalibration and multi-accuracy, %In the appendix, we will show that 
multivalidity is stronger than our requirement in Equation~\eqref{eq:conformal:extend}.  As discussed below, we can use \HM{} for this as well.  In
Theorem~\ref{thm:no harm}, we will further provide a different argument that our algorithm is doing something nontrivial, even when we do not enforce multivalidity. 
}

In the following, we generalize Eq.~\eqref{def:equalized coverage: ours} to obtain the {\em Intersectional Equalized Coverage} requirement:
\begin{equation}\label{eq:conformal:extend}
        \sup_{c\in\mathcal C}\Big|\E[ c(x)(\1\{y\ge l_{\delta}(x)\}-(1-\delta))]\Big|\le\alpha,
\end{equation}
where $\mathcal C$ denotes an arbitrary pre-specified collection of functions (including indicator functions of pre-specified sub-populations, and also more general continuous functions, which is typical in the multicalibration literature).  For example, $\cC$ might be the functions that can be computed by decision trees of a fixed depth. 
Applying \HM{} with $s(l,y)=(1-\delta)-\1\{l\le y\}$, yields the following result.

\begin{theorem}\label{thm:conformal} 
Suppose that $y\mid x$ is a continuous random variable \footnote{{Our analysis can be directly extended to the case where $y$ is stored in finite precision and $\alpha$ is taken to be larger than the precision error}}, the conditional density of $y$ given $x$ is upper bounded by $K_p$, and $|\E[y]|<C$ for some universal constant $C>0$. In addition, suppose that $\bE_{x\sim \cD^\cX}[c^2(x)]\le B$ for all $c\in\mathcal C$.
Then for a suitably chosen $\eta=\cO(\alpha/(K_pB))$, using the potential function $\cL(l,y)=(1-\delta)\cdot l-\min(l-y,0)$, \HM{} (Algorithm~\ref{alg:gmc}) converges in $T=\cO(K_p B^2/\alpha^2)$ steps, and outputs a function $l_\delta(\cdot)$ satisfying
$$
        \sup_{c\in\mathcal C}\Big|\E[ c(x)(\1\{y\ge l_{\delta}(x)\}-(1-\delta))]\Big|\le\alpha.
$$
\end{theorem}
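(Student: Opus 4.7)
The proof strategy is to invoke the \HM{} meta-theorem (Theorem~\ref{thm:gmc}) with the specific mapping $s(l,y)=(1-\delta)-\1\{l\le y\}$ and potential $\cL(l,y)=(1-\delta)l-\min(l-y,0)$ prescribed in the theorem. Once we know the algorithm produces $l_\delta$ with $|\E[c(x)\,s(l_\delta(x),y)]|\le\alpha$ for every $c\in\cC$, the stated conclusion follows immediately: because $y\mid x$ is continuous, $\Pr(l_\delta(x)=y)=0$, so $\1\{l_\delta(x)\le y\}$ and $\1\{y\ge l_\delta(x)\}$ agree almost surely and the \GMC{} inequality is exactly Equation~\eqref{eq:conformal:extend}. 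Thus everything reduces to checking the three Assumptions (a)--(c) of Section~\ref{sec:notion}.

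Assumption (a) is supplied by the hypothesis $\E_{x\sim\cD^\cX}[c^2(x)]\le B$. Assumption (c) is easy: a direct computation shows $\cL(\cdot,y)$ is piecewise linear and convex with slopes $-\delta$ on $l<y$ and $1-\delta$ on $l>y$, minimized at $l=y$. Taking $O=\R$ makes the projection trivial; if instead one wishes to enforce a bounded range containing $y$ almost surely, the projection moves $v$ closer to $y$ and hence cannot increase $\cL$.

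The main substantive step is Assumption (b), and the key obstacle is that $\cL(\cdot,y)$ is not differentiable at $l=y$, so the Fact in Section~\ref{sec:notion} cannot be applied pointwise. The resolution is to exploit that (b) is stated \emph{in expectation}: integrating the kinked function $\cL$ against the conditional density of $y$ smooths it out. Concretely, fix $x$ and set
\[
G(l)\;:=\;\E_y[\cL(l,y)\mid x]\;=\;(1-\delta)\,l+\int_l^{\infty}(y-l)\,p(y\mid x)\,dy.
\]
Differentiating under the integral yields $G'(l)=F(l\mid x)-\delta=\E_y[s(l,y)\mid x]$ and $G''(l)=p(l\mid x)\le K_p$, so $G$ is $K_p$-smooth. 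Applying the standard smoothness inequality pointwise in $x$ gives $G(f(x))-G(\tilde f(x))\ge G'(f(x))(f(x)-\tilde f(x))-\tfrac{K_p}{2}(f(x)-\tilde f(x))^2$; taking expectation over $x$ and using the tower rule to rewrite $\E_x[G'(f(x))(f(x)-\tilde f(x))]$ as $\E[(f(x)-\tilde f(x))\,s(f(x),y)]$ delivers Assumption (b) with $\kappa_\cL=K_p/2$.

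It remains to exhibit finite $C_\cL^l,C_\cL^u$. Pointwise $\cL(l,y)\ge\cL(y,y)=(1-\delta)y$, so $\E[\cL(f(x),y)]\ge(1-\delta)\E[y]\ge-(1-\delta)C$, which furnishes $C_\cL^l$; an analogous calculation at the initializer $f_0$ (which may be chosen, e.g., as a constant near $\E[y]$) bounds $C_\cL^u$. Plugging $\kappa_\cL=\Theta(K_p)$ and the hypothesized bound $B$ into Theorem~\ref{thm:gmc} yields the step size $\eta=\cO(\alpha/(K_p B))$ and the iteration count $T=\cO(K_p B^2/\alpha^2)$ claimed in the statement. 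The only subtle point in the whole argument is the verification of (b) for the non-smooth $\cL$; everything else is direct bookkeeping that transfers the conclusion of Theorem~\ref{thm:gmc} into the coverage statement of Theorem~\ref{thm:conformal}.
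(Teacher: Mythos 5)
Your proposal is correct, and the overall structure (reduce everything to verifying Assumptions (a)--(c) of Theorem~\ref{thm:gmc}, then transfer the \GMC{} bound back to the coverage statement using continuity of $y\mid x$) matches the paper. The interesting difference is in \emph{how} you verify the smoothness-like condition (b). The paper works pointwise: it expands $\cL(l,y)-\cL(l',y)$ by case analysis on the sign pattern of $l-y$ and $l'-y$, isolates the ``good'' term $(l-l')\bigl((1-\delta)-\1\{l<y\}\bigr)$, and bounds the remainder $(l'-y)(\1\{l'<y<l\}-\1\{l'>y>l\})$ in expectation by $K_p|l-l'|^2$ using the density bound directly. You instead integrate first, forming $G(l)=\E_y[\cL(l,y)\mid x]=(1-\delta)l+\E[(y-l)^+\mid x]$, observe that $G\in C^2$ with $G'(l)=F(l\mid x)-\delta=\E_y[s(l,y)\mid x]$ and $G''(l)=p(l\mid x)\le K_p$, and then invoke the standard quadratic lower bound for $K_p$-smooth functions plus the tower rule. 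Both arguments are sound; yours has the virtue of making the role of $K_p$ transparent (it is literally the second derivative of the conditionally-averaged potential) and of avoiding the somewhat delicate case bookkeeping in the paper's display, which the paper's own write-up handles a bit tersely. The paper's pointwise route, on the other hand, is the one that generalizes more readily when $\cL$ is not even piecewise $C^1$ or when the density bound is only one-sided. Your remaining steps — the lower bound $\E[\cL(l,y)]\ge(1-\delta)\E[y]\ge-(1-\delta)C$, the projection monotonicity for (c), and the plug-in of $\kappa_\cL=\Theta(K_p)$ into Theorem~\ref{thm:gmc} — all agree with the paper.
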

\begin{proof}[Proof of Theorem~\ref{thm:conformal}]
In order to apply Theorem~\ref{thm:gmc}, it is sufficient to verify that the potential function $\cL(l,y)=(1-\delta)l-\min(l-y,0)$ - the smooth-like condition, that is
 $$
 \E[\cL(l,y)-\cL(l',y)]\ge \E[(l(x)-{l'}(x))s(f(x),y)] - \kappa_L  \E[(l(x)-l'(x))^2].
 $$

In particular, we  have \begin{align*}
    \cL(l,y)-\cL(l',y)=&(1-\delta)(l-l')+\min(l'-y,0)-\min(l-y,0)\\
    =&(1-\delta)(l-l')+(l'-l)\1\{l-y,l'-y<0\}-(l-y)\1\{1'-y>0>l-y\}\\
        &+(l'-y)\1\{1'-y<0<l-y\}\\
    =&(1-\delta)(l-l')+(l'-l)\1\{l-y<0\}-(l'-y)\1\{1'-y>0>l-y\}\\
    &+(l'-y)\1\{1'-y<0<l-y\}\\
    =&(l-l')((1-\delta)-\1\{l-y<0\})+(l'-y)(\1\{1'<y<l\}-\1\{1'>y>l\}).
\end{align*}
We have 
\begin{align*}
|(l'-y)(\1\{1'<y<l\}-\1\{1'>y>l\})|\le&  |l-l'|\cdot(\1\{1'<y<l\}+\1\{1'>y>l\})).
\end{align*}
The above inequality is due to the fact that if a $|l-l'|$ perturbation can change the sign of $l'-y$, then $|l'-y|<\eta|c(x)|$.

Assuming the conditional density of $y$ given $x$ is upper bounded by $K_p$, we then have $$
\E[|(l'-y)(\1\{\ell'<y<l\}+\1\{\ell'>y>l\})|]\le L \eta^2 \E[c(x)^2]\le K_p |l-l'|^2.
$$

In addition, we verify that $\E[\cL(l,y)]$ has a uniform lower bound. We discuss in cases. If $l-y<0$ (i.e., $l<y$), we have 
$\cL(l,y)=(1-\delta)\cdot l-\min(l-y,0)=(1-\delta)\cdot l-l+y=y-\delta \cdot l>(1-\delta)\cdot y$; if $l-y>0$ (i.e., $l>y$), we have 
$\cL(l,y)=(1-\delta)\cdot l-\min(l-y,0)=(1-\delta)\cdot l>(1-\delta)\cdot y.$

Since $|\E[y]|<C$ for some universal constant $C>0$ by assumption, we have $$
\E[\cL(l,y)]\ge -(1-\delta)C.
$$
\end{proof}

The following Corollary is immediate from Theorem~\ref{thm:conformal}. 
\begin{corollary}
\label{rem:twice}
If we apply the algorithm above (i.e. the HappyMap specialized to our current setting) to two different cutoff values $\delta/2$ and $(1-\delta/2)$ and obtain $l_{\delta/2}(x)$ and $l_{1-\delta/2}(x)$ respectively, we obtain the two-sided prediction interval $\Cint(x)=[l_{\delta/2}(x),l_{1-\delta/2}(x)]$ such that $\sup_{c\in\mathcal C}\E[ c(x)(\1\{y\in\Cint(x)\}-(1-\delta))]\le2\alpha
$.
\end{corollary}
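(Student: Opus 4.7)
The plan is to reduce the two-sided coverage statement to two applications of Theorem~\ref{thm:conformal} by writing the interval indicator as a telescoping difference of one-sided indicators. Since $y\mid x$ is continuous, we have $\Prob(y = l_{1-\delta/2}(x)\mid x)=0$, so (assuming the upper endpoint is at least the lower endpoint; this can be verified directly or absorbed into a measure-zero event) one has in expectation
\[
\1\{y \in [l_{\delta/2}(x),\, l_{1-\delta/2}(x)]\} = \1\{y\ge l_{\delta/2}(x)\} - \1\{y\ge l_{1-\delta/2}(x)\}.
\]

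Next, using the elementary identity $1-\delta = (1-\delta/2) - \delta/2$, I would regroup the target level to obtain
\[
\1\{y\in\Cint(x)\} - (1-\delta) = \bigl(\1\{y\ge l_{\delta/2}(x)\} - (1-\delta/2)\bigr) - \bigl(\1\{y\ge l_{1-\delta/2}(x)\} - \delta/2\bigr).
\]
Multiplying by $c(x)$, taking expectations, and applying the triangle inequality decouples the bound into two independent pieces. Theorem~\ref{thm:conformal} invoked with parameter $\delta/2$ controls the first piece by $\alpha$; the same theorem invoked with parameter $1-\delta/2$, which yields the guarantee $|\E[c(x)(\1\{y\ge l_{1-\delta/2}(x)\} - \delta/2)]|\le\alpha$, controls the second. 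Summing produces $2\alpha$, and taking the supremum over $c\in\cC$ gives the claimed inequality.

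The argument is essentially additive and I do not expect any serious obstacle. The only mild subtleties are the boundary behavior at $y = l_{1-\delta/2}(x)$, handled by the assumed continuity of $y\mid x$, and the bookkeeping of verifying that Theorem~\ref{thm:conformal} applied at miscoverage level $1-\delta/2$ produces the symmetric upper-tail constant $\delta/2$ rather than $1-(1-\delta/2)$. Both are routine rather than technical.
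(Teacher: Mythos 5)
Your proof is correct and follows the same route as the paper: decompose the interval indicator into a difference of one-sided tail indicators, split $1-\delta = (1-\delta/2) - \delta/2$, and apply the triangle inequality together with two invocations of Theorem~\ref{thm:conformal}. Your version is, if anything, more carefully stated than the paper's (the paper writes $\1\{y\le \cdot\}$ where $\1\{y\ge \cdot\}$ is meant), and you are right to flag that the argument implicitly assumes the two independently-produced endpoints satisfy $l_{\delta/2}(x)\le l_{1-\delta/2}(x)$, a quantile-crossing issue the paper silently passes over.
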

{
\begin{remark}\label{rem:score}
In the conformal prediction literature, there is another commonly used way to construct two-sided prediction intervals based on non-conformity scores. More specifically, the  non-conformity score $m(x,y)$ is a metric that measures how the response $y$  fails to ``conform'' to a prediction $h(x)$, where $h$ is an arbitrarily fixed prediction function. For example, a popular choice of the  non-conformity score for regression tasks is $m(x,y)=|y-h(x)|$. Additional choices for the non-conformity score can be found in \cite{shafer2008tutorial, angelopoulos2021gentle}. Given a non-conformity score $m(x,y)$, the set $\{y: m(x,y)\le f(x)\}$ will naturally yield a two-sided interval for $y$. To this end, applying our method directly to $(x,\tilde y)$ with $\tilde y:=m(x,y)$ will then produce a valid $(1-\delta)$ prediction interval.
\end{remark}
}

Recall that multivalidity \cite{gupta2021online, jung2022batch}  bounds
$|\Prob(y\ge l_{\delta}(x)\mid x\in A, l_{\delta}(x))- (1-\delta)|$ for all $A$ and value of $l_\delta(x)$. Analogous to the relationship between multicalibration and multi-accuracy, 
multivalidity is generally stronger than the requirement in Equation~\eqref{eq:conformal:extend}, {resulting in potentially much longer prediction intervals  (a more detailed discussion is deferred to Section~\ref{sec:comp}). } 
By considering $c$ of the form $c(l_{\delta}(x),x)=\1\{l_{\delta}(x)\in I\}$ for $I\in\mathcal I$, where $\mathcal I$ is the collection of small bins $\mathcal I=\{[-C,-C+\lambda], [-C+\lambda,-C+2\lambda],...,[C-2\lambda, C-\lambda] ,[C-\lambda, C]\}$ for some discretization level $\lambda$ and $C$ being the upper bound of $|y|$, 
  \GMC{} recovers multivalidity, and applying Algorithm~\ref{alg:gmc} yields a new algorithm that achieves multivalidity.

  {The extension of \eqref{def:equalized coverage: orginal} to intersecting sets is also considered in \cite{foygel2021limits}.
  They define the set collection $\cA$ {\it ex post facto}, after the training data have been collected, to be all sufficiently large subsets of the training set, and then enumerate over all these (exponentially many) sets.
  In contrast, as is typical in the multicalibration literature, we name the sets {\it a priori} and rely on weak learning, which can be more efficient than exhaustive search when the collection $\mathcal A$ has special structure, even if the collection is infinite.
  } 

{HappyMap produces a prediction interval that is fair with respect to a large collection of groups. In the following, we present a result analyzing the utility, {\it i.e.}, the width of the constructed prediction interval, when \HM{} is applied as {\em post-processing} of a high-quality, but not necessarily group-fair, initial conformal map~$l_0$.  We are agnostic regarding the source of $l_0$: it may be given to us or we may obtain it by splitting the sample and building a high-quality but fairness-unaware conformal mapping using any standard method.}

To facilitate the analysis, we first introduce the following two definitions on quantiles. These definitions are standard and follow the literature of quantile estimation, see \cite{christmann2007svms, steinwart2011estimating} and the reference therein.
\begin{definition}\label{def:quantile}
For a distribution $Q$ with supp $Q \subset [-C, C]$ for a universal constant $C>0$, let us denote the $\tau$-quantile of $Q$ by $F_\tau(Q):=\{t\in\R:Q((-\infty,t])\ge \tau \text{ and } Q([t,\infty)])\ge 1-\tau\}$. In case of $F_\tau(Q)$ being an interval, we write $t_{\min}(Q)=\min F_\tau(Q)$ 
and $t_{\max}(Q)=\max F_\tau(Q)$. $Q$ is said to have a $\tau$-quantile of type $q \in (1, \infty)$ if there exist constants $\alpha_Q >0$ and $b_Q > 0$ such that for all $s\in[0,\alpha_Q]$, \begin{align*}
    Q(t_{\min}-s,t_{\min})&\ge b_Qs^{q-1}\\
        Q(t_{\max},t_{\max}+s)&\ge b_Qs^{q-1}.
\end{align*}
\end{definition}
Since we are not interested in a single distribution $Q$ on $\R$ but in distributions $P$ on $\mathcal X \times \R$,  the following definition extends the previous definition to such $P$. 

\begin{definition}
Let $p \in (0, \infty]$, $q \in (1, \infty)$
and $P$ be a distribution on $\mathcal X \times \R$.
$P$ is said to have a $\tau$-quantile of $p$-average type $q$, if there exists a set $\Omega_{\mathcal X}\subset \mathcal X$ such that $\Prob(x\in\Omega_{\mathcal X})=1$, 
 $supp( P(\cdot|x)) \subset [-C,C]$  for  a universal constant $C>0$, $P(\cdot|x)$ has a $\tau$-quantile of type $q$ for all any $x \in \Omega_{\mathcal X}$, and there exist $\alpha_{P(\cdot\mid x)}$ and $b_{P(\cdot\mid x)}$ as defined in Definition~\ref{def:quantile}, such that $\E[|b_{P(\cdot\mid x)}^{-p}(x)\cdot \alpha_{P(\cdot\mid x)}^{(1-q)p}(x)|]<\infty$.
\end{definition}

We then have the following result showing that applying HappyMap comes at nearly no cost in the width of the prediction interval: if the input $l_0$ is close to the optimal quantile function, then the HappyMap algorithm can preserve this approximation.
\begin{theorem}\label{thm:no harm}
Suppose the conditions in Theorem~\ref{thm:conformal} hold, and $P$, the distribution on $\cX \times \cY$, has a $\delta$-quantile of $p$-average type $q$. Assume $pq/(p+1)\ge 2$. Let $l_{\delta}^*(x)\in F_\delta(P(\cdot\mid x))$ be any element in the $\delta$-quantile of $Y\mid X=x$. If an input $l_0$ satisfies $\E_{x\sim P_X}(l_0(x)-l_{\delta}^*(x))^2\le \beta$, then there exists a universal constant $C$, such that the output of the HappyMap Algorithm~\ref{alg:gmc}, $l_\delta$, satisfies $$
\E(l_\delta(x)-l_{\delta}^*(x))^2\le C\beta.
$$
\end{theorem}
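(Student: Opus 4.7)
The key observation is that, specialized to the conformal setting of Theorem~\ref{thm:conformal}, the \HM{} potential coincides with the pinball (quantile) loss up to an $l$-independent additive term. A direct case analysis gives
$$
\cL(l,y) \;=\; (1-\delta)\, l - \min(l-y, 0) \;=\; (1-\delta)\, y + \rho_\delta(y - l),
\qquad \rho_\delta(u) := \delta\, u_+ + (1-\delta)\, u_-,
$$
so $l_\delta^*(x)$, being a conditional $\delta$-quantile, is a pointwise minimizer of $\E[\rho_\delta(y - l) \mid x]$, and hence a global minimizer of $\E[\cL(l(X), Y)]$ over measurable $l$. The plan then is to sandwich the excess pinball risk of $l_\delta$.

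For the upper side, the analysis of Theorem~\ref{thm:gmc} shows that every update of Algorithm~\ref{alg:gmc} strictly decreases the expected potential, so the output satisfies $\E[\cL(l_\delta, Y)] \le \E[\cL(l_0, Y)]$; subtracting the common value $\E[\cL(l_\delta^*, Y)]$ yields
$$
\E[\rho_\delta(Y - l_\delta(X))] - \E[\rho_\delta(Y - l_\delta^*(X))] \;\le\; \E[\rho_\delta(Y - l_0(X))] - \E[\rho_\delta(Y - l_\delta^*(X))].
$$
I would then control the right-hand side using the conditional density upper bound $K_p$ already assumed in Theorem~\ref{thm:conformal}: writing the conditional excess pinball risk as $\int_{l_\delta^*(x)}^{l_0(x)} (F(t \mid x) - \delta)\, dt$ and using $|F(t \mid x) - \delta| \le K_p |t - l_\delta^*(x)|$, the conditional excess risk is at most $(K_p/2)(l_0(x) - l_\delta^*(x))^2$. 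Taking expectations and invoking the input hypothesis $\E[(l_0 - l_\delta^*)^2] \le \beta$ bounds the excess pinball risk of $l_0$, and hence of $l_\delta$, by $(K_p/2)\, \beta$.

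For the lower side, I would invoke a Steinwart--Christmann-style self-calibration (``variance'') inequality for the pinball loss. Under the $p$-average type-$q$ assumption, integrating the pointwise lower bound $F(l_\delta^*(x) + u \mid x) - \delta \ge b_{P(\cdot \mid x)}(x)\, u^{q-1}$ on $u \in [0, \alpha_{P(\cdot \mid x)}(x)]$ (and its symmetric counterpart below $l_\delta^*$) produces a conditional lower bound on excess pinball risk of the shape $\sim b_{P(\cdot\mid x)}(x)\, \min(|l - l_\delta^*|, \alpha_{P(\cdot\mid x)}(x))^{q}$. A H\"older argument then converts this into
$$
\E\big[(l(X) - l_\delta^*(X))^2\big] \;\le\; C^*\, \big(\E[\rho_\delta(Y - l(X))] - \E[\rho_\delta(Y - l_\delta^*(X))]\big),
$$
where $b^{-1}(x)$ is absorbed by the moment assumption $\E[b_{P(\cdot\mid x)}^{-p} \alpha_{P(\cdot\mid x)}^{(1-q)p}] < \infty$, and the exponent condition $pq/(p+1) \ge 2$ is used precisely to dominate the $L_2$ norm on the left by the $L_{pq/(p+1)}$ norm that naturally appears from H\"older. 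Applying this with $l = l_\delta$ and chaining with the previous display gives $\E[(l_\delta - l_\delta^*)^2] \le C^* (K_p/2)\, \beta$, establishing the claim.

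The main technical obstacle is the self-calibration step. Producing a clean, \emph{linear-in-excess-risk} $L_2$ bound (as opposed to one with an unwanted exponent $2/q < 1$) requires the careful H\"older argument indicated, where the roles of the distributional moment and the exponent hypothesis $pq/(p+1)\ge 2$ must be traded off so that the resulting norm controls the $L_2$ distance. The remaining arguments are algebraic, relying only on the closed form of $\cL$ and on the density upper and lower controls already embedded in the hypotheses of Theorems~\ref{thm:conformal} and~\ref{thm:no harm}.
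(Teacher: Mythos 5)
Your proposal matches the paper's proof in all essential respects: both exploit the identification of $\cL$ with the pinball loss up to an $l$-independent term, the monotone decrease of $\E[\cL]$ under \HM{}, the conditional density bound $K_p$ to get $\E[\cL(l_0,y)]-\E[\cL(l^*_\delta,y)]\le K_p\beta$, and a Steinwart--Christmann self-calibration inequality combined with $pq/(p+1)\ge 2$ to convert excess pinball risk into an $L_2$ bound on $l_\delta - l_\delta^*$. The only difference is that the paper simply invokes Theorem~2.7 of \cite{steinwart2011estimating} for the self-calibration step where you sketch its derivation.
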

\begin{proof}[Proof of Theorem~\ref{thm:no harm}]
Recall that we use the potential function $\cL(l,y)=(1-\delta)l-\min(l-y,0)$ when we apply HappyMap to this current problem. According to the derivations in the Proof of Theorem~\ref{thm:conformal}. We have that \begin{align*}
    &|\E[\cL(l,y)]-\E[\cL(l',y)]|\\
    &\le |\E[(l-l')((1-\delta)-\1\{l-y<0\})]|+|\E[(l'-y)(\1\{1'<y<l\}-\1\{1'>y>l\})]|\\
    &\le|\E[(l-l')((1-\delta)-\1\{l-y<0\})]|+K_p\cdot\E[|l-l'|^2].
\end{align*}
Since $l_\delta^*(x)=\argmin_l \E[\cL(l,y)\mid x]$, and $\Prob(l_\delta^*(x)<y\mid X=x)=1-\delta$, we have
\begin{align*}
    \E[\cL(l_0,y)]-\E[\cL(l_\delta^*,y)]
    \le K_p\cdot \E|l_0(x)-l_\delta^*(x)|^2\le K_p \beta.
\end{align*}
Moreover, by the Proof of Theorem~\ref{thm:conformal}, we have $\E[\cL(l,y)]\le \E[\cL(l_0,y)]$, implying 
\begin{align*}
    \E[\cL(l,y)]-\E[\cL(l_\delta^*,y)]
    \le K_p \beta.
\end{align*}
Then by Theorem 2.7 in \cite{steinwart2011estimating}, and use the fact that the function $L_2$ norm is dominated by the function $L_{pq/(p+1)}$ norm when $pq/(p+1)\ge 2$, we have $$
\E[|l(x)-l^*_\delta(x)|^2]\le C' \cdot\E[\cL(l,y)]-\E[\cL(l_\delta^*,y)] \le C' K_p \beta.
$$
\end{proof}
The above theorem indicates that when the conditional distribution $Y\mid X=x$ is unimodal and symmetric\footnote{similar assumptions have been used in conformal prediction literature, see, eg. \cite{lei2014distribution,lei2021conformal}}, if we have a good initialization function $f_0$ with approximation error $\beta=o(1)$, then the prediction interval constructed as in either Corollary~\ref{rem:twice} or Remark~\ref{rem:score} would converge to the width of the optimal $(1-\delta)$ prediction interval $[l^*_{\delta/2}(x), l^*_{1-\delta/2}(x)]$ where we know the distribution $Y\mid X=x$ exactly. We note that $\beta=o(1)$ can be potentially achieved by first splitting the data into two halves: $l_0$ is trained on the first half, and processed by HappyMap on the second half. Such data-splitting is common in the conformal prediction literature~\cite{lei2014distribution,romano2019conformalized, lei2021conformal}.

%===============================
\section{Application: Target-independent Learning}\label{sec:ua}
Kim {\it et al.}~\cite{kim2022universal} demonstrated that, with an appropriate collection $\cC$ of propensity reweighting functions, training a $(\cC,\alpha)$-multicalibrated predictor on source data $\cD_{so}$ yields efficient estimation of statistics of interest on data drawn from previously unseen target distributions $\cD_{ta}$~\cite{kim2022universal}.
In this section, we use \HM{} to extend these results to problems in target-independent learning that lie beyond the statistical estimation problems considered in \cite{kim2022universal}:
target-independent prediction and uncertainty quantification. Our approach also yields a fruitful new perspective on analyzing missing data, giving new solutions to this problem.

Let us use $\cZ=\{so,ta\}$ to indicate sampling in the source and target distributions, respectively. As in \cite{kim2022universal}, we consider a joint distribution $\cD_{joint}$ over $(x,y,z)$ triples. The source, respectively, target, populations $\cD_{so}$ and $\cD_{ta}$ can be viewed as the joint distribution on $(x,y)$, conditioning on $z=ta$ or $z=so$, respectively. We similarly use the notation $D^\cX_z$ and $D^\cY_z$ to denote the marginal distributions obtained by conditioning on $z\in\cZ$. As in~\cite{kim2022universal}, we assume certain functional relationship between $x$ and $y$ is the same on the distributions $\cD_{so}$ and $\cD_{ta}$, which is known as ``ignorability" in the causal inference literature and "covariate shift" in machine learning.  Formally, we have:

\begin{assumption}[Covariate shift assumption] \label{ass:cov}
For a triplet $(x,y,z)$ drawn from $\cD_{joint}$,
\begin{small}
$$\bP_{(x,y,z)\sim \cD_{joint}}(x,y,z)=\bP(x)\bP(y|x)\bP(z|x).$$
\end{small}
\end{assumption}
\noindent
Based on Assumption~\ref{ass:cov}, we have
\begin{small}
$$\bP_{(x,y)\sim \cD_{so}}(y|x)=\bP_{(x,y)\sim \cD_{ta}}(y|x).$$
\end{small}
By convention and without loss of generality, throughout this section, we assume uniform prior over $\cZ=\{so,ta\}$, i.e. $\bP(z=ta)=\bP(z=so)$.

A common tool for performing covariate shift studies is {\em Propensity score reweighting}. The propensity score is defined to be  $e(x)=\bP(z=so|x)$, and the propensity score ratio
$\frac{1-e(x)}{e(x)}=\frac{\bP(z=ta|x)}{\bP(z=so|x)}$ 
can be used to convert an expectation over $\cD^\cX_{so}$ to an expectation over $\cD^\cX_{ta}$.
However, without observing samples from $\cD_{ta}$ at training time, we cannot estimate the propensity score ratio. Kim {\it et al.}~\cite{kim2022universal} proposed multicalibrating with respect to the class $\cC=\{\frac{1-\sigma(x)}{\sigma(x)}:\sigma\in\Sigma\}$, for a family of functions $\Sigma$ that (it is hoped) captures the propensity score ratios of interest.  They showed that multicalibration with respect to this class (in fact, even multi-accuracy with respect to this class) ensures that the resulting predictor provides estimation accuracy competitive with the best propensity reweighting function in the class~$\cC$, a notion they call {\em universal adaptability}.
In the realizable, case, when the propensity ratio for the unseen target domain is in the class $\cC$, $f$ is guaranteed to yield a good estimate for the statistic of interest on the target domain.

\subsection{Universally Adaptive Predictors under $\ell_2$ Loss}\label{subsec:acctransfer}
As a warm-up exercise, which does not require the full power of \HM , we consider universal adaptivity of {\em predictors} $f:\cX\rightarrow [0,1]$ under $\ell_2$ loss.  This is more complex than statistical estimation under covariate shift, and it requires a more complicated class of functions $c(f(x),x)$. 

Formally, our goal is to obtain a prediction $f$ with a small estimation error 
$$\bE_{(x,y)\sim \cD_{ta}}(f(x)-\bE_{(x,y)\sim \cD_{ta}}[y|x])^2$$
for $\cY=[0,1]$. We note that this quantity is commonly used as a measure of the quality of a prediction function.  
By Assumption~\ref{ass:cov} (covariate shift), $\bE_{(x,y)\sim \cD_{so}}[y|x]=\bE_{(x,y)\sim \cD_{ta}}[y|x])$, yielding
\begin{align*}
 \bE_{(x,y)\sim \cD_{ta}}(f(x)-\bE_{(x,y)\sim \cD_{ta}}[y|x])^2  &= \bE_{(x,y)\sim \cD_{ta}}(f(x)-\bE_{(x,y)\sim \cD_{ta}}[y|x])(f(x)-y)\\
 &=\bE_{(x,y)\sim \cD_{ta}}(f(x)-\bE_{(x,y)\sim \cD_{so}}[y|x])(f(x)-y).
\end{align*}
Using the propensity score ratio, we have 
$$\bE_{(x,y)\sim \cD_{ta}}(f(x)-\bE_{(x,y)\sim \cD_{so}}[y|x])(f(x)-y)=\bE_{(x,y)\sim \cD_{so}} \left( \frac{1-e(x)}{e(x)}(f(x)-\bE_{(x,y)\sim \cD_{so}}[y|x]) \right) (f(x)-y).$$
At training time, we cannot see the samples from $\cD_{ta}$, so we cannot estimate $e(\cdot)$ from samples as in the classical reweighting approach.  However, following \cite{kim2022universal}, one can use a class $\{\frac{1-\sigma(x)}{\sigma(x)}:\sigma\in\Sigma\}$ to represent the propensity score ratios of interest, and try to find an $f:\bR\mapsto [0,1]$ with a small error:
$$\sup_{\sigma\in\Sigma}\bE_{(x,y)\sim \cD_{so}} \left( \frac{1-\sigma(x)}{\sigma(x)}(f(x)-\bE_{(x,y)\sim \cD_{so}}[y|x]) \right)\cdot(f(x)-y).$$
This almost gives us the form we need in Eq.~\eqref{eq:1}; it remains only to deal with  $\bE_{(x,y)\sim \cD_{so}}[y|x]$.  This is accomplished by introducing a class $\cP=\{p:\cX\mapsto [0,1]\}$ containing, it is hoped, a good approximation of $\bE_{(x,y)\sim \cD_{so}}[y|x]$, and trying to find an $f:\bR\mapsto [0,1]$ such that
\begin{equation}\label{eq:acctransfer}
    \Big|\sup_{\sigma\in\Sigma,p\in\cP}\bE_{(x,y)\sim \cD_{so}} \left( \frac{1-\sigma(x)}{\sigma(x)}(f(x)-p(x)) \right) \cdot (f(x)-y)\Big|\le \alpha,
\end{equation}
for some small value~$\alpha$. In other words, we take $\cC=\{c(f(x),x)=\pm\frac{1-\sigma(x)}{\sigma(x)}(f(x)-p(x)):\sigma\in\Sigma,p\in\cP\}$, where $\sigma,p\in(0,1)$. We define the approximation error by
$$\beta_1(p)=\sqrt{\bE_{(x,y)\sim\cD_{so}}(p(x)-\bE_{(x,y)\sim\cD_ {so}}[y|x])^4}$$ and $$\beta_2(\sigma)=\sqrt{\bE_{(x,y)\sim\cD_{so}}(\frac{1-\sigma(x)}{\sigma(x)}-\frac{1-e(x)}{e(x)})^4}.$$

Although we do not use the full power of \HM , since we have stayed with the original mapping $(f(x)-y)$, we can nonetheless apply Theorem~\ref{thm:gmc} to obtain:
\begin{theorem}\label{thm:acctransfer}
 Assume $\sigma(\cdot)\in(c_1,c_2)$, where $0<c_1<c_2<1$. Then, we have  $|c(f(x),x)|\le 2(1-c_1)/c_1:=B$. Let   $\beta=\inf_{\sigma\in\Sigma,p\in\cP}\sqrt{2B^2\beta_1(p)+2\beta_2(\sigma)}.$ Suppose we run Algorithm~\ref{alg:gmc} with a suitably chosen $\eta=\cO(\alpha/B)$, then the algorithm converges in $T=\cO(2B/\alpha^2)$ iterations, using the potential function $\cL(f(x),y)=1/2(f(x)-y)^2$ and $O=[0,1]$,  which results in $$\Big|\bE_{(x,y)\sim \cD_{so}}[\frac{1-e(x)}{e(x)}(f_t(x)-\bE_{(x,y)\sim\cD_{so}}[y|x])(f_t(x)-y)]\Big|\le\alpha+\beta,$$
for the output $f_t(\cdot)$ of Algorithm~\ref{alg:gmc}.
\end{theorem}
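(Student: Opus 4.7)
\textbf{Proof proposal for Theorem~\ref{thm:acctransfer}.} My plan is to first show that the setup satisfies the three assumptions of Theorem~\ref{thm:gmc} (so the convergence and iteration bounds follow immediately), and then convert the $(\cC,\alpha)$-happy-multicalibration guarantee into the universal-adaptivity bound via a triangle-inequality plus Cauchy-Schwarz argument.

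First I would verify the assumptions for $\cL(f,y)=\tfrac{1}{2}(f-y)^2$, $s(f,y)=f-y$, and $O=[0,1]$. Assumption (a) is immediate from the hypothesis $|c(f(x),x)|\le B$. For assumption (b), note that $\cL$ is the anti-derivative (in its first argument) of $s$, is $1$-smooth in $f$, so the smoothness fact from Section~\ref{sec:notion} yields (b) with $\kappa_\cL=1/2$; since $f,y\in[0,1]$ we can take $C^l_\cL=0$ and $C^u_\cL=1/2$. Assumption (c) is immediate because projecting onto $[0,1]$ strictly weakly decreases $(f-y)^2$ when $y\in[0,1]$. Plugging these into Theorem~\ref{thm:gmc} gives the claimed rates $\eta=\cO(\alpha/B)$ and $T=\cO(B/\alpha^2)$, and guarantees that the output $f_t$ satisfies, for every $\sigma\in\Sigma$ and $p\in\cP$,
\begin{equation}\label{eq:plan-mc}
\Big|\bE_{(x,y)\sim\cD_{so}}\!\Big[\tfrac{1-\sigma(x)}{\sigma(x)}(f_t(x)-p(x))(f_t(x)-y)\Big]\Big|\le\alpha.
\end{equation}

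Next I would convert \eqref{eq:plan-mc} into the target bound. Write $m(x):=\bE_{\cD_{so}}[y\mid x]$ and, for fixed $\sigma,p$, set
\[
A=\tfrac{1-e(x)}{e(x)}(f_t(x)-m(x))(f_t(x)-y),\qquad
B=\tfrac{1-\sigma(x)}{\sigma(x)}(f_t(x)-p(x))(f_t(x)-y).
\]
A straightforward algebraic decomposition gives
\[
A-B=\Big(\tfrac{1-e(x)}{e(x)}-\tfrac{1-\sigma(x)}{\sigma(x)}\Big)(f_t(x)-m(x))(f_t(x)-y)+\tfrac{1-\sigma(x)}{\sigma(x)}(p(x)-m(x))(f_t(x)-y).
\]
Because $f_t, y, m, p\in[0,1]$ after projection, the factors $|f_t-m|$ and $|f_t-y|$ are bounded by $1$, and $\tfrac{1-\sigma(x)}{\sigma(x)}\le B/2$ by hypothesis. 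Applying Cauchy--Schwarz to the first summand (absorbing $|f_t-m|,|f_t-y|\le 1$ into the bounded factor) and to the second, then upgrading the second moments of the propensity-gap and of $p-m$ to fourth moments via $\bE[X^2]\le\sqrt{\bE[X^4]}$, yields
\[
|\bE[A-B]|\le \sqrt{\beta_2(\sigma)}+(B/2)\sqrt{\beta_1(p)}.
\]
Applying $(a+b)^2\le 2a^2+2b^2$ and the crude bound $(B/2)^2\le 2B^2$ gives $|\bE[A-B]|\le \sqrt{2\beta_2(\sigma)+2B^2\beta_1(p)}$. Combining with \eqref{eq:plan-mc} and taking the infimum over $\sigma\in\Sigma,p\in\cP$ gives the advertised $\alpha+\beta$ bound.

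The main obstacle, though minor, is keeping the constants consistent in the Cauchy--Schwarz step: one has to decide how to split the three-factor product $(\cdot)(f_t-m)(f_t-y)$, and the cleanest route is to bound the two bounded $[0,1]$-valued differences pointwise rather than integrate them, so the surviving $L^2$ quantity is exactly the propensity-gap. Once this is done, upgrading $L^2$ to $L^4$ to match the definitions of $\beta_1,\beta_2$ and a single use of the $(a+b)^2\le 2a^2+2b^2$ inequality yield the stated $\beta$. Everything else is bookkeeping and is already encapsulated in Theorem~\ref{thm:gmc}.
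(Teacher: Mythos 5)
Your proof is correct and follows essentially the same strategy as the paper's: verify assumptions (a)--(c) for $\cL=\tfrac12(f-y)^2$, apply Theorem~\ref{thm:gmc} to obtain the $\alpha$-bound over $\cC$, and then decompose the error $\tfrac{1-e}{e}(f_t-m)-\tfrac{1-\sigma}{\sigma}(f_t-p)=\bigl(\tfrac{1-e}{e}-\tfrac{1-\sigma}{\sigma}\bigr)(f_t-m)+\tfrac{1-\sigma}{\sigma}(p-m)$, finishing with Cauchy--Schwarz and an $L^2\!\to\!L^4$ upgrade to match the definitions of $\beta_1,\beta_2$. The only cosmetic difference is the order of operations in the final bound: you split by the triangle inequality first and absorb the $[0,1]$-bounded factors pointwise, whereas the paper applies Cauchy--Schwarz against $(f_t-y)$ first and then uses $(a+b)^2\le 2a^2+2b^2$ on $\delta^2$; both routes yield the same $\beta$.
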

The proof of Theorem~\ref{thm:acctransfer} is deferred to Section~\ref{sec:app:acctransfer}.

%===============================
\subsection{Universally Adaptive Conformal Prediction }\label{subsec:conformal}

 In this section, we apply \HM{} to achieve universally adaptive conformal prediction, wherein we train a conformal prediction model on source data while ensuring validity on unseen target distributions, in the covariate shift setting. 
 
Recall that the goal of standard conformal prediction is to obtain a $(1-\delta)$ prediction interval for $y$. 
 In this section, for simplicity of presentation, we consider the one-sided interval $(l(x), \infty)$ and focus on the case in which the response $y$ is continuous\footnote{As in Section~\ref{sec:application}, we can handle the two-sided case with two applications of our technique (Corollary~\ref{rem:twice}).} Letting $\cD_{ta}$ denote the unseen target distribution, our goal is to construct $l(\cdot)$ such that
    $\Prob_{(x,y)\sim \cD_{ta}}(l(x)\le y)$ is close to the desired level $(1-\delta)$: $$
    |\Prob_{(x,y)\sim \cD_{ta}}(l(x)\le y)-(1-\delta)|\le\alpha.
    $$

 Note that the above inequality implies $\Prob_{(x,y)\sim \cD_{ta}}(l(x)\le y)\ge 1-\delta-\alpha$, which is the more standard requirement used in the conformal prediction literature.  By Assumption~\ref{ass:cov}, one can rewrite this probability: 
    $$
   \Prob_{(x,y)\sim \cD_{ta}}(l(x)\le y)= \bE_{(x,y)\sim \cD_{so}}\Big(\frac{1-e(x)}{e(x)}\1\{l(x)\le y\}\Big).
    $$
Now, in the same spirit as in the previous section (specifically, Eq.~\eqref{eq:acctransfer}), we seek
$l(\cdot)$ such that
    \begin{equation}\label{eq:conformal}
        \Big|\sup_{\sigma\in\Sigma}\bE_{(x,y)\sim \cD_{so}}[\frac{1-\sigma(x)}{\sigma(x)}\cdot(\1\{l(x)\le y\}-(1-\delta))]\Big|\le \alpha,
\end{equation}
for unseen $\cD_{ta}$ and for some $\alpha>0$.  To this end, we apply \HM{} (Algorithm \ref{alg:gmc}) with $\cC=\{c(x)=\frac{1-\sigma(x)}{\sigma(x)}: \sigma\in\Sigma\}$ and the mapping $s(l,y)=\1\{l\le y\}-(1-\delta)$.

\begin{theorem}\label{thm:one-sided}
Suppose that $y\mid x$ is a continuous random variable, the conditional density of $y$ given $x$ is upper bounded by $K_p$, and $|\E[y]|<C$ for some universal constant $C>0$. Assume $\E_{x}[c^2(x)]\le B$,  $\forall c \in \cC$, and consider the  \HM{} meta-algorithm (Algorithm~\ref{alg:gmc}) with $s(l,y)=\1\{l\le y\}-(1-\delta)$. 
Then for any target distribution whose marginal density function of $x$, $p_{ta}(x)$, and let $\beta$ satisfy {$\inf_{c\in\mathcal C}{\E[(c(x)-\frac{p_{ta}(x)}{p_{so}(x)})^2]}\le\beta^2,$}
there exists a choice of $\eta=\cO(\alpha/(K_pB))$ such that Algorithm~\ref{alg:gmc} terminates in $T=\cO(LB/\alpha^2)$, using the potential function $\cL(l,y)=(1-\delta)\cdot l-\min(l-y,0)$. The resulting function $l_{\delta}$ satisfies 
    $$
    | \Prob_{(x,y)\sim \cD_{ta}}(l_{\delta}(x)\le y)-(1-\delta)|\le\alpha+\beta.
    $$
\end{theorem}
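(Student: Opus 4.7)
\textbf{Proof plan for Theorem~\ref{thm:one-sided}.} The plan is to first invoke the convergence analysis of Theorem~\ref{thm:conformal} on the source distribution, and then use the covariate shift assumption plus Cauchy--Schwarz to transfer the guarantee from $\cD_{so}$ to $\cD_{ta}$ with an added penalty of $\beta$. I would start by observing that the hypotheses of Theorem~\ref{thm:conformal} (continuous conditional response with bounded density $K_p$, bounded $\E[y]$, and $\E[c^2(x)]\le B$ for the specific class $\cC=\{(1-\sigma)/\sigma:\sigma\in\Sigma\}$) are exactly what is assumed here. Therefore Theorem~\ref{thm:conformal}, instantiated with the potential $\cL(l,y)=(1-\delta)l-\min(l-y,0)$ and the same step size $\eta=\cO(\alpha/(K_pB))$, terminates in $T=\cO(K_pB^2/\alpha^2)$ iterations and returns $l_\delta$ satisfying
\begin{equation*}
\sup_{c\in\cC}\Big|\bE_{(x,y)\sim\cD_{so}}\big[c(x)(\1\{l_\delta(x)\le y\}-(1-\delta))\big]\Big|\le\alpha.
\end{equation*}

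Next, I would translate the target-distribution quantity into a source expectation using covariate shift. Since $\bP_{\cD_{so}}(y\mid x)=\bP_{\cD_{ta}}(y\mid x)$, we have
\begin{equation*}
\Prob_{(x,y)\sim\cD_{ta}}(l_\delta(x)\le y)-(1-\delta)=\bE_{(x,y)\sim\cD_{so}}\Big[\tfrac{p_{ta}(x)}{p_{so}(x)}\big(\1\{l_\delta(x)\le y\}-(1-\delta)\big)\Big],
\end{equation*}
using that $\bE_{x\sim\cD_{so}^\cX}[p_{ta}(x)/p_{so}(x)]=1$. Then, letting $c^\ast\in\cC$ be a near-minimizer of $\E[(c(x)-p_{ta}(x)/p_{so}(x))^2]$ (so $\|c^\ast-p_{ta}/p_{so}\|_{L_2(\cD_{so}^\cX)}\le\beta$), I would add and subtract $c^\ast(x)$ inside the expectation to split it into (i) a ``HappyMap'' piece $\bE_{\cD_{so}}[c^\ast(x)(\1\{l_\delta\le y\}-(1-\delta))]$ and (ii) a ``propensity-approximation'' piece $\bE_{\cD_{so}}[(p_{ta}/p_{so}-c^\ast)(\1\{l_\delta\le y\}-(1-\delta))]$. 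Piece (i) is bounded by $\alpha$ by the first step, and piece (ii) is bounded by Cauchy--Schwarz: its absolute value is at most $\beta$ times the $L_2$ norm of $\1\{l_\delta\le y\}-(1-\delta)$, which is at most $1$ since the indicator deviation is bounded by $\max(\delta,1-\delta)\le 1$. Summing gives the claimed bound $\alpha+\beta$.

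The only nontrivial content is the convergence of \HM{} with the non-differentiable mapping $s(l,y)=\1\{l\le y\}-(1-\delta)$, and that is already handled by Theorem~\ref{thm:conformal}; the remaining steps are a change-of-measure and Cauchy--Schwarz. Thus I do not anticipate a genuine obstacle, beyond making sure the approximating $c^\ast$ is chosen from the symmetric, $L_2$-closed class $\cC$ so that the infimum in the definition of $\beta$ is attained (or approached within arbitrary precision, which only costs an additional vanishing term).
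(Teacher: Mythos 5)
Your proposal is correct and follows essentially the same route as the paper's proof: apply Theorem~\ref{thm:conformal} to obtain the $\alpha$-level source guarantee, rewrite the target coverage error as a source expectation weighted by the propensity ratio (the paper's $\tfrac{1-e(x)}{e(x)}$ coincides with your $\tfrac{p_{ta}(x)}{p_{so}(x)}$ under the uniform-prior convention), add and subtract a near-optimal $c^\ast\in\cC$, and control the residual by Cauchy--Schwarz together with $|\1\{l\le y\}-(1-\delta)|\le 1$. The only cosmetic differences are that you make the add-and-subtract step and the near-minimizer choice of $c^\ast$ explicit, which the paper leaves implicit.
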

The proof Theorem~\ref{thm:one-sided} follows a combination of the universal adaptivity argument and the analysis of  Theorem~\ref{thm:conformal}, and is deferred to Section~\ref{sec:one-sided}.

    \begin{remark} {To our knowledge, Tibshirani {\it et al.}~\cite{tibshirani2019conformal} were the first to consider conformal prediction under covariate shift. Their method relies on exact knowledge of $(1-e(x))/e(x)$. In contrast, we achieve a valid asymptotic coverage guarantee %and the same prediction interval length,
    without exact knowledge of $(1-e(x))/e(x)$, provided something close to this score is in~$\Sigma$.}
    \end{remark}
 
 Finally, we note that we can extend our results to universally adaptive equalized coverage and universal multivalidity if we enrich $\mathcal C$. For example, by including in $\mathcal C$ both $\{c(x)=\frac{1-\sigma(x)}{\sigma(x)}: \sigma\in\Sigma\}$ and the sub-population indicator functions, we can achieve equalized coverage under covariate shift.

\subsection{Prediction with Missing Data}
\GMC{} can also be applied to the study of missing data, a common issue in statistical data analysis \cite{little2019statistical,tony2019high}. Suppose we consider learning tasks that predict response $y\in [0,1]$ based on features $x \in\mathcal X$.  Let $X$ denote the complete data matrix, where the $i$-th row $x_i$ is the features of the $i$-th observation and the response vector is $Y=(y_1,...,y_n)^\top$. Suppose the data are i.i.d., with $(x_i, y_i)\stackrel{i.i.d.}{\sim} \cD$. In the missing data problem, some entries of $X$ are missing, so we define $x^{(1)}_{i}$ to be the components of $x_i$ that are observed for observation $i$, and $x^{(0)}_{i}$ the components of $x_i$ that are missing.  In addition, we let $R_i$ be the indicator of complete cases, that is, $R_i=1$ iff $x_i$ is fully observed (i.e. $x^{(1)}_i=x_{i}$).

There are three major missing data mechanisms: missing completely at random (MCAR),
missing at random (MAR) and missing not at random (MNAR). We say a dataset is MCAR if the distribution of missingness indicators $R$ is independent of $X$. For MAR,  $R$ depends on $X$ only
through its observed components, i.e., $R_i \indep x_i^{(0)}|x_i^{(1)}$. 
For MNAR, the distribution of $R$ could depend on
the missing components of $X$. Our method, described next, applies to all three mechanisms.

Suppose our goal is to learn a predictor function $f$ such that the test mean
squared error (MSE) $\E_{(x,y)\sim \cD}[(y-f(x))^2]$ is small. There are two principal methods in the literature: weighting and imputation.  The weighting approach~\cite{li2013weighting} minimizes the following loss function that is evaluated on the complete data, 
\begin{equation}\label{eq:missingdata}
    \arg\min_{f} \sum_{i=1}^n {\1\{R_i=1\}}{w(x_i)} (y_i-f(x_i))^2,
\end{equation}
where $w(x_i)$ approximates $\frac{\bP(x_i\mid R_i=1)}{\bP(x_i)}$.

We will adapt the weighting approach to our framework, and use $\mathcal C$ as a class of functions where some $c(f(x),x)\in\mathcal C$ approximates $\frac{\bP(x\mid R_i=1)}{\bP(x)}(\E[y| x]-f(x))$. 
Since $\E_{(x,y)\sim \cD}[(y-f(x))^2]$ can be equivalently written as $\E_{(x,y)\sim \cD\mid R=1}[\frac{\bP(x\mid R_i=1)}{\bP(x)}(\E[y| x]-f(x))\cdot(y-f(x))]$. Instead of minimizing the loss function as in \eqref{eq:missingdata}, we aim to find an $f$ such that  $\sup_{c\in\mathcal C}\E_{(x,y)\sim \cD\mid R=1}[c(x)(y-f(x))]\le\alpha$, where $\mathcal C$ is a class of functions that approximates $\frac{\bP(x\mid R_i=1)}{\bP(x)}(\E[y| x]-f(x))$. Applying \HM{} with such a function class $\cC$ and $s(f,y)=f-y$, we obtain the following result. 
\begin{theorem}
\label{thm:missing}
Suppose $\inf_{c\in\mathcal C}\sqrt{\E[( c(f(x),x)-\frac{\bP(x\mid R_i=1)}{\bP(x)}(\E[y| x]-f(x))]^2}\le\beta$, and $$\sup_{c\in\mathcal C}\E[c^2(f(x),x)]\le B.$$ Consider the  \HM{} meta-algorithm (Algorithm~\ref{alg:gmc}) with $s(f,y)=f-y$. 
Then there exists a suitable choice of $\eta=\cO(\alpha/B)$ such that Algorithm~\ref{alg:gmc} terminates in $T=\cO(LB/\alpha^2)$, using the potential function $\cL(f(x),y)=1/2(f(x)-y)^2$ and $O=[0,1]$. The resulting function $l_{\delta}$ satisfies
$$
\E_{(x,y)\sim \cD}[(y-f(x))^2]\le\alpha+\beta.
$$
\end{theorem}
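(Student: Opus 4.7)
The plan is to derive Theorem~\ref{thm:missing} as a direct instance of Theorem~\ref{thm:gmc} applied on the conditional distribution $\cD\mid R=1$, and then to translate the resulting HappyMap guarantee into a bound on the full-population test MSE via the change-of-measure identity motivating the algorithm.

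First, I would verify the three assumptions preceding Theorem~\ref{thm:gmc} for the choices $s(u,y)=u-y$, $\cL(u,y)=\tfrac12(u-y)^2$, and $O=[0,1]$. Assumption (a) is exactly the hypothesis $\sup_{c\in\cC}\E[c^2(f(x),x)]\le B$. For Assumption (b), $\partial_u\cL(u,y)=u-y=s(u,y)$ is $1$-Lipschitz in $u$, so by the Fact in Section~\ref{sec:notion} the smoothness-type inequality holds with $\kappa_\cL=\tfrac12$; moreover $\cL\ge 0$ and $f_0,y\in[0,1]$ supply $C_\cL^l=0$ and $C_\cL^u=\tfrac12$. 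Assumption (c) is the standard fact that Euclidean projection onto a convex set containing $y$ cannot increase $(v-y)^2$. Plugging these constants into Theorem~\ref{thm:gmc}, together with the symmetry of $\cC$, yields the stated rates $\eta=\cO(\alpha/B)$ and $T=\cO(B/\alpha^2)$ and the guarantee
\[\sup_{c\in\cC}\bigl|\E_{(x,y)\sim\cD\mid R=1}[c(f(x),x)(f(x)-y)]\bigr|\le\alpha.\]

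Next, I would invoke the identity motivating the formulation: under the missingness/covariate-shift-type assumption used in the section, $\E_{(x,y)\sim\cD}[(y-f(x))^2]$ equals (up to an $f$-independent irreducible-noise term) $\E_{(x,y)\sim\cD\mid R=1}[c^\star(f(x),x)(y-f(x))]$, where $c^\star(f(x),x):=\tfrac{\bP(x\mid R=1)}{\bP(x)}(\E[y\mid x]-f(x))$ is exactly the function that $\cC$ is assumed to $L^2$-approximate within $\beta$. Picking $\tilde c\in\cC$ with $\sqrt{\E[(\tilde c-c^\star)^2]}\le\beta$, I would split $c^\star=\tilde c+(c^\star-\tilde c)$ inside the expectation. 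The first piece, $|\E_{\cD\mid R=1}[\tilde c(f(x),x)(y-f(x))]|$, is at most $\alpha$ by the HappyMap guarantee (the sign being harmless since $\cC$ is symmetric). For the residual, Cauchy--Schwarz combined with $|y-f(x)|\le 1$ on $[0,1]\times[0,1]$ gives
\[\bigl|\E_{\cD\mid R=1}[(c^\star-\tilde c)(f(x),x)(y-f(x))]\bigr|\le\sqrt{\E[(c^\star-\tilde c)^2]}\cdot\sqrt{\E[(y-f(x))^2]}\le\beta.\]
Adding the two bounds delivers $\E_\cD[(y-f(x))^2]\le\alpha+\beta$, as required.

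The main obstacle I anticipate is purely bookkeeping: carefully tracking the direction of the density ratio $\bP(x\mid R=1)/\bP(x)$ versus its reciprocal in the change-of-measure step, and making sure that the $L^2$-norm under which $\cC$ approximates $c^\star$ in the hypothesis agrees with the measure under which Cauchy--Schwarz is applied in the final bound; once that alignment is verified, the remaining estimates are elementary, and no structural new ideas beyond the proof template of Theorem~\ref{thm:acctransfer} are needed.
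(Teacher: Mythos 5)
Your proof is correct and mirrors the paper's own argument: verify Assumptions (a)--(c) for $s(f,y)=f-y$, $\cL=\tfrac12(f-y)^2$, $O=[0,1]$, apply Theorem~\ref{thm:gmc} on $\cD\mid R=1$, then change measure to the full population, insert a near-optimal $\tilde c\in\cC$ approximating the reweighted residual $\tfrac{\bP(x\mid R=1)}{\bP(x)}(\E[y\mid x]-f(x))$, and bound the approximation error by Cauchy--Schwarz together with $|y-f(x)|\le 1$. You are also right to flag the two bookkeeping points (the direction of the density ratio and the suppressed $f$-independent irreducible-noise term in the MSE identity) which the paper's proof passes over without comment.
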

The proof of Theorem~\ref{thm:missing} is deferred to Section~\ref{sec:app:missing}.
\begin{remark} 
The analysis above considered learning a predictor with small mean squared error in the missing data setting. 
The same idea can be used to other settings when there is missing data, including constructing prediction intervals when part of the observed data are missing, and enforcing multi-group fairness notions (that are studied in Section~\ref{sec:application}) with missing data.
\end{remark}

\section{Discussion and Conclusion}
In this paper, we propose \GMC, a generalization of
multi-calibration, unifying many existing algorithmic fairness notions and target-independent learning
approaches, and yielding a wide range of new applications, including a new fairness notion
for uncertainty quantification, a novel technique for conformal prediction under covariate shift,
and a different approach to analyzing missing data.  At a higher level, we advance the field of macro-learning.

An interesting future direction is to extend \GMC{} to (low-dimensional) representation functions. As argued recently in the deep learning community, pre-training or representation learning is a powerful tool in improving final prediction accuracy\footnote{For example, in Table~5 of {https://arxiv.org/pdf/2204.02937.pdf}, pre-training improves  accuracy increases from 60\% to 90+\%.}. 
The hope is that a suitable extension of \GMC{} for representation learning will further facilitate target-independent learning by using an extended \HM{} that post-processes the representation learned through pre-training. 
In addition, our current target-independent learning requires the covariate shift assumption (that is, the conditional distribution $y\mid x$ is invariant across different environments). An interesting question is to study how to use \GMC{} for target-independent learning when there are some other types of invariance, {\it e.g.}, as in invariant risk minimization (IRM)~\cite{arjovsky2019invariant,deng2020representation,deng2021adversarial}.

The power of the ``multi-X"/macro-learning framework, including \GMC , stems from choosing a rich collection $\mathcal C$. However, a rich $\mathcal C$ means a harder task for the weak agnostic learner, and increased sample complexity.  This motivates our investigation of reduced function class complexity (Section~\ref{sec:reduction}).  
To fully realize the potential of macro-learning it would be helpful to understand, even from the perspective of heuristics, the trade-off between the power of $\cC$ and the computational complexity of the weak learner.

\bibliography{cite,cite2}

\appendix

\section{Additional Literature Review}\label{app:liter}
  H\'ebert-Johnson {\it et al.}\,introduced the method for obtaining efficient multi-calibrated predictors in the batch setting~\cite{hebert2018multicalibration}, multicalibration in the online setting has a long history in statistics (not with this name, however; see, for example, \cite{FosterVohra,G3} and the references therein). {In parallel with \cite{hebert2018multicalibration}, \cite{kearns2018preventing} introduced notions of multi-group fairness, including statistical parity subgroup fairness and false positive subgroup fairness.}   \cite{jung2021moment} extends the notion of multicalibration to higher moments in the batch setting, and~\cite{gupta2021online} provides an online solution. {In particular, taking $s(f(x),y)=f(x)^k-y^k$ recovers the raw moment version of moment multicalibration; they also consider calibrating high-order central moments  using a novel and different technique.} Moreover, multicalibration has also been applied in the batch setting to solve several problems of the same flavor: fair ranking~\cite{DKRRY2019}; {\em ommiprediction}, which is roughly learning a predictor that, for a given class of loss functions, can be post-processed to minimize loss for any function in the class~\cite{gopalan2021omnipredictors}; and providing an alternative to propensity scoring for the purposes of generalization to future populations~\cite{kim2022universal}. We summarize those previous concepts in the following table and show their relationship with our new notion.

\begin{table}[h!]
\centering
\begin{tabular}[c]{ |p{5cm}||p{5.5cm}|p{3cm}| }

\hline
 \multicolumn{3}{|c|}{Relationship between previous concpets and the $s$-Happy Multicalibration} \\
 \hline
Concept name &choice of $c(f(x),x)$&choice of $s(f(x),y)$\\
 \hline
 Multiaccuracy \cite{kim2019multiaccuracy}   & $c(x)$    &$f(x)-y$\\

 Multicalibration \cite{hebert2018multicalibration}&   $c(f(x),x)$  & $f(x)-y$   \\
% Moment Multicalibration \cite{gupta2021online} &$c(x)$ & $f^k(x)-y^k$\\
 Low-degree Multicalibration \cite{gopalan2022low}   &$\tilde{c}(x)w(f(x))$ for some function $\tilde c$ & $f(x)-y$\\
 Minimax Group Fairness \cite{diana2021minimax}&   $\1\{x\in G\}$ for demographic group $G$  & non-negative loss\\
Omni-predictor \cite{gopalan2021omnipredictors} (more in Appendix~\ref{sec:exist})& $\1\{f(x)\in I_v\}$ and $c(x)\1\{f(x)\in I_v\}$   & $f(x)-y$   \\
%SP Subgroup Fairness \cite{kearns2018preventing} (more in Appendix~\ref{sec:gerrymandering})& $\1\{x\in G\}$ for demographic group $G$  & $\1\{U<f(x)\}$   \\
 \hline
\end{tabular}
\caption{Summary of the relationship with prevoius concepts}\label{tab}
\end{table}

\section{Omitted Details for Section~\ref{sec:notion}}\label{app:notion}

\subsection{Details for the Sample Version Algorithm}\label{app:sample version}

In this section, we will provide the corresponding sample version algorithm for Algorithm~\ref{alg:gmc} and its formal theoretical guarantee. Before that, similarly as in \cite{kim2019multiaccuracy}, let us introduce an \textbf{informal} but handy concept. For a dataset $D$, we use $\bE_{(x,y)\sim D}$ to denote the empirical expectation over $D$. Throughout this section, we assume $\sup_{c\in\cC}\|c\|_\infty\le B_0$ for some universal constant $B_0>0$.

\begin{definition}[Dimension of a function class]
We use $d(\cC)$ to denote the dimension of an agnostically learnable class $\cC$, such that if the sample size $m\ge C_1 \frac{d(\cC)+\log(1/\delta)}{\alpha^2}$ for some universal constant $C_1>0$, then the random samples $S_m$ from $\cD$ guarantee uniform convergence over $\cC$ with error at most $\alpha$ with failure probability at most $\delta$, that is, for any fixed $f$ and fixed $s$ with $\|s\|_\infty\le C_2$ for some universal constant $C_2>0$:
$$\sup_{c(f(x),x)\in\cC}\Big|\bE_{(x,y)\sim \cD}[c(f(x),x)s(f(x),y)]-\bE_{(x,y)\sim S_m}[c(f(x),x)s(f(x),y)]\Big|\le \alpha.$$
\end{definition}

Examples of such upper bounds for this dimension include metric entropy for $\cC$.

Given the above preparations, let us provide the following sample version algorithm. 

\begin{algorithm}[H]{
\caption{HappyMap}\label{alg:samplegmc}
\KwIn{Step size $\eta>0$, bound $T\in\bN_+$ on number of iterations, initial predictor $f_0(\cdot)$, distribution $\cD$, convex set $O\subseteq \bR$, mapping $s:\bR \times {\mathcal Y}\rightarrow \bR$, $T$ validation datasets $D_0,\cdots,D_T$, each one with $m$ samples }
}
Set $t=0$

\While {$t < T$ ~and $~ \exists c_t\in\cC: \bE_{(x,y)\sim D_t}[c(f_t(x),x)s(f_t(x),y)]> 3\alpha/4$}
{
\vspace{0.05in}
Let $c_t$ be an arbitrary element of $\cC$ satisfying the condition in the while statement

$\forall x\in \cX$,~$f_{t+1}(x)=\Pi_O\big[f_t(x)-\eta\cdot c(f(x),x)\big]$ %(by using the $c$ above in the condition)
\\
$t=t+1$}

\KwOut{$f_t(\cdot)$}
\end{algorithm}

\begin{theorem}\label{thm:sample}
Under our assumptions, if $\cC$ is symmetric, suppose we run Algorithm~\ref{alg:samplegmc} with a suitably chosen $\eta=\cO(\alpha/(\kappa_\cL B))$ and $m=\Omega(T\cdot\frac{d(\cC)+\log(1/\delta)}{\alpha^2})$, then with probability at least $1-\delta$, the algorithm converges in $T=\cO((C^u_\cL-C^l_\cL) \kappa_\cL B/\alpha^2)$, which results in $$\Big|\bE_{(x,y)\sim \cD}[c(f(x),x)s(f(x),y)]\Big|\le\alpha$$
for the final output $f$ of Algorithm~\ref{alg:samplegmc}.
\end{theorem}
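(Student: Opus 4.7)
The plan is to reduce Theorem~\ref{thm:sample} to the population analysis of Theorem~\ref{thm:gmc} by paying a small constant-factor penalty for sampling error. The two conceptual ingredients are (i) a uniform convergence argument over $\cC$ that holds simultaneously across all iterations, so that the empirical stopping criterion $3\alpha/4$ translates into a population lower bound of $\alpha/2$ whenever the loop fires and a population upper bound of $\alpha$ at termination; and (ii) the same potential-function descent argument as in Theorem~\ref{thm:gmc}, but applied with the degraded population gap $\alpha/2$ and a suitably rescaled step size.

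First I would fix an iteration $t$ and exploit the fact that the validation set $D_t$ is drawn fresh, independently of $D_0,\ldots,D_{t-1}$. Since $f_t$ is measurable with respect to $D_0,\ldots,D_{t-1}$, we may condition on $f_t$ and on $f_t$ apply the uniform convergence guarantee that defines $d(\cC)$: with failure probability at most $\delta/(T+1)$, and with the stated sample size $m$, we have
\[
\sup_{c\in\cC}\Big|\bE_{(x,y)\sim D_t}[c(f_t(x),x)s(f_t(x),y)]-\bE_{(x,y)\sim \cD}[c(f_t(x),x)s(f_t(x),y)]\Big|\le \alpha/4.
\]
A union bound over $t=0,\ldots,T$ yields an event $\cE$ of probability at least $1-\delta$ on which the above inequality holds at every iteration simultaneously. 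Here one uses the assumptions that $\|c\|_\infty\le B_0$ and that $s(f_t(x),y)$ is bounded along the algorithm's trajectory (which follows from the boundedness of $O$ and of $\kappa_\cL$-Lipschitz $s$ via its anti-derivative $\cL$).

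Next, conditioned on $\cE$, I would run the potential argument of Theorem~\ref{thm:gmc} essentially verbatim. Whenever the while loop fires on some $c_t\in\cC$ with empirical expectation exceeding $3\alpha/4$, uniform convergence upgrades this to a population expectation exceeding $3\alpha/4-\alpha/4=\alpha/2$. Assumptions (b) and (c) then give
\[
\bE_{\cD}[\cL(f_t,y)]-\bE_{\cD}[\cL(f_{t+1},y)] \ge \eta\cdot\tfrac{\alpha}{2}-\kappa_\cL\eta^2 B,
\]
and choosing $\eta=\alpha/(4\kappa_\cL B)=\cO(\alpha/(\kappa_\cL B))$ makes each iteration decrease the potential by at least $\alpha^2/(16\kappa_\cL B)$. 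Since the potential is trapped between $C^l_\cL$ and $C^u_\cL$, the loop must terminate in at most $16(C^u_\cL-C^l_\cL)\kappa_\cL B/\alpha^2=\cO((C^u_\cL-C^l_\cL)\kappa_\cL B/\alpha^2)$ rounds. At termination, the empirical bound $\bE_{D_t}[c(f(x),x)s(f(x),y)]\le 3\alpha/4$ for all $c\in\cC$ combines with uniform convergence to yield $\bE_{\cD}[c(f(x),x)s(f(x),y)]\le \alpha$, and symmetry of $\cC$ converts this into the two-sided absolute-value bound.

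The only real obstacle is a subtle one: the predictor $f_t$ is data-dependent, and a naive uniform convergence bound over $\cC$ does not automatically handle a data-dependent $f_t$. The resolution I would use is exactly the fresh-sample design of Algorithm~\ref{alg:samplegmc}: because $D_t$ is independent of $f_t$, the conditional argument above applies, and the sample-complexity cost is just a union bound across the $T$ iterations, which is absorbed by the $\log(1/\delta)$ (or, as stated, by the more conservative factor of $T$) in $m$. The remaining steps---recording the inequalities, telescoping the potential decrease, and verifying the symmetry argument---are essentially mechanical adaptations of the proof of Theorem~\ref{thm:gmc}.
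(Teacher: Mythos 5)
Your proposal is correct and follows essentially the same approach as the paper: uniform convergence per iteration to translate the empirical threshold $3\alpha/4$ into a population gap of $\alpha/2$ on the update step and a population bound of $\alpha$ at termination, then the same potential-descent argument as Theorem~\ref{thm:gmc}. You are in fact slightly more careful than the paper's sketch in two respects---you make the conditioning-on-$f_t$ and union-bound argument explicit, and you correctly rescale the step size to $\eta=\alpha/(4\kappa_\cL B)$ so the per-iteration potential drop stays strictly positive under the degraded gap $\alpha/2$, whereas keeping the population step size $\alpha/(2\kappa_\cL B)$ from Theorem~\ref{thm:gmc} would make the quadratic term cancel the linear one.
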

\begin{proof}
We can take a suitably chosen $m=\Omega(T\cdot\frac{d(\cC)+\log(1/\delta)}{\alpha^2})$, such that for all $t\in [T]$, 
$$\Big|\bE_{(x,y)\sim \cD}[c(f_t(x),x)s(f_t(x),y)]-\bE_{(x,y)\sim D_t}[c(f_t(x),x)s(f_t(x),y)]\Big|\le\alpha/4.$$

Thus, whenever Algorithm~\ref{alg:samplegmc} updates, we know 
$$\bE_{(x,y)\sim \cD}[c(f_t(x),x)s(f_t(x),y)]\ge \alpha/2.$$
Thus, the progress for the underlying potential function is at least
$\frac{\alpha^2}{8\kappa_\cL B}$. Following similar proof of Algorithm~\ref{alg:gmc}, as long as $T$ satisfying $(C^U_\cL-C^l_\cL)/\frac{\alpha^2}{8\kappa_\cL B}<T$, we know Algorithm~\ref{alg:samplegmc} provides a solution $f$ such that 
$$\Big|\bE_{(x,y)\sim \cD}[c(f(x),x)s(f(x),y)]\Big|\le\alpha.$$
\end{proof}

\begin{remark}
The sample version results for all the theorems in Section 4,5 can naturally follow the above theorem. We will not reiterate that in our paper.
\end{remark}

\section{Omitted Details for Section~\ref{sec:application}}\label{app:application}

\subsection{Proof of Remark~\ref{rem:twice}}
\begin{proof} Since $\1\{y\in\Cint(x)\}=\1\{y\le l_{\delta/2}(x)\}-\1\{y\le l_{1-\delta/2}(x)\}$ and $(1-\delta)=(1-\delta/2)-\delta/2$,  we have $$\1\{y\in\Cint(x)\}-(1-\delta)=[\1\{y\le l_{\delta/2}(x)\}-\delta/2]-[\1\{y\le l_{1-\delta/2}(x)\}-(1-\delta/2)].$$ Therefore,  $\sup_{c\in\mathcal C}\E[ c(x)(\1\{y\in\Cint(x)\}-(1-\delta))]\le\sup_{c\in\mathcal C}\E[ c(x)(\1\{y\le l_{\delta/2}(x)\}-\delta/2]+\sup_{c\in\mathcal C}\E[ c(x)(\1\{y\le l_{1-\delta/2}(x)\}-(1-\delta/2))]\le 2\alpha$
\end{proof}

\section{Omitted Details for Section~\ref{sec:ua}}\label{app:ua}

\subsection{Proof of Theorem~\ref{thm:acctransfer}}\label{sec:app:acctransfer}
\begin{proof}
$\cC=\{c(f(x),x)=\pm\frac{1-\sigma(x)}{\sigma(x)}(f(x)-p(x)):\sigma\in\Sigma,p\in\cP\}$, where $\sigma,p\in(0,1)$ and$\cL(f(x),y)=1/2(f(x)-y)^2$ and $O=[0,1]$. Then, we directly apply the proof of Theorem~\ref{thm:gmc}, and we can obtain for all $\sigma\in\Sigma$ and $p\in\cP$,
$$\Big|\bE_{(x,y)\sim \cD}[\frac{1-\sigma(x)}{\sigma(x)}(f_t(x)-p(x))(f_t(x)-y)]\Big|\le\alpha.$$

 To obtain the final guarantee, for any $\hat f$, let us denote $\delta_{\sigma,\hat f,p}=\phi^*_{\hat f}(x)-\phi_{\sigma,\hat f,p}(x)$, where $\phi^*_{\hat f}$ is the underlying truth $\frac{1-e(x)}{e(x)}(\hat f(x)-\mathbb{E}(y|x))$ and $\phi_{\sigma,\hat f,p}(x)=\frac{1-\sigma(x)}{\sigma(x)}(\hat f(x)-p(x))$.

  \begin{align*}
    \mathbb{E}_{(x,y)\sim\cD_{ta}}[\left(\hat f(x)-\mathbb{E}(y|x)\right)^2]&=\mathbb{E}_{(x,y)\sim \cD_{so}}[\frac{1-e(x)}{e(x)}(\hat f(x)-\mathbb{E}(y|x))(\hat f(x)-y)]\\
    &=\mathbb{E}_{(x,y)\sim \cD_{so}}(\phi_{\sigma,\hat f,p}(x)(\hat f(x)-y))+\mathbb{E}_{s}((\phi^*_{\hat f}(x)-\phi_{\sigma,\hat f,p}(x))(\hat f(x)-y))\\
    &\le \alpha + \sqrt{\mathbb{E}_{(x,y)\sim \cD_{so}}(\hat f-y)^2\mathbb{E}_{(x,y)\sim \cD_{so}}\delta^2_{\sigma,\hat f,p}}.
    \end{align*}
  
  We should notice that we can choose  suitable $\sigma\in\Sigma, p\in \mathcal P$ and the above bound can be written as:
  $$ \mathbb{E}_{(x,y)\sim \cD_{ta}}[\left(\hat f(x)-\mathbb{E}(y|x)\right)^2]\le \alpha + \min_{\sigma\in\Sigma,p\in \mathcal P}\sqrt{\mathbb{E}_{s}(\hat f-y)^2\mathbb{E}_{(x,y)\sim \cD_{so}}\delta^2_{\sigma,\hat f,p}}$$
  
  Now, the only task is to bound $\min_{\sigma\in\Sigma,p\in \mathcal P}\mathbb{E}_{(x,y)\sim \cD_{so}}\delta^2_{\sigma,\hat f,p}$. Let us denote the bias $$\beta_{st}=\min_{\sigma\in\Sigma}\sqrt{\mathbb \mathbb{E}_{(x,y)\sim \cD_{so}}(\frac{1-\sigma(x)}{\sigma(x)}-\frac{1-e(x)}{e(x)})^4}.$$

  Thus, we have 
 $$\min_{\sigma\in\Sigma,p\in\mathcal P}\mathbb{E}_{(x,y)\sim \cD_{so}}\delta^2_{\sigma,\hat f,p}\le 2\beta_{st}\sqrt{\mathbb{E}_{(x,y)\sim \cD_{so}}(\hat f-\mathbb{E}(y|x))^4}+2\min_{p\in\mathcal P}\sqrt{\mathbb{E}_{(x,y)\sim \cD_{so}}[\frac{1-\bar{\sigma}(x)}{\bar{\sigma}(x)}]^4}\sqrt{\mathbb{E}_s[p(x)-\mathbb{E}(y|x)]^4}$$
  where $\bar\sigma$ is the one reach the min of $\beta_{st}$. So, combining with the fact that $f,p,y\in [0,1]$, we can obtain the final result.
\end{proof}

\subsection{Proof of Theorem~\ref{thm:one-sided}}\label{sec:one-sided}
\begin{proof}
  Following Theorem~\ref{thm:gmc} and proof of Theorem~\ref{thm:conformal} (which shows that the function $s(l,y)=\1\{l\le y\}-(1-\delta)$ satisfies the conditions (b)-(c) of Theorem~\ref{thm:gmc}), we have 
$$
\Big|\sup_{c\in\cC}\bE_{(x,y)\sim \cD_{so}}[c(x)\cdot(\1\{l(x)\le y\}-(1-\delta))]\Big|\le \alpha.
$$
Since    $\Prob_{(x,y)\sim \cD_{ta}}(l(x)\le y)= \bE_{(x,y)\sim \cD_{so}}\Big(\frac{1-e(x)}{e(x)}\1\{l(x)\le y\}\Big)$, we then have \begin{align*}
    |\Prob_{(x,y)\sim \cD_{ta}}(l(x)\le y)-(1-\delta)|\le&\alpha+|\bE_{(x,y)\sim \cD_{so}}[(c(x)-\frac{1-e(x)}{e(x)})\cdot(\1\{l(x)\le y\}-(1-\delta))] |\\
    \le& \alpha+\sqrt{\bE_{(x,y)\sim \cD_{so}}[(c(x)-\frac{1-e(x)}{e(x)})^2]\cdot \bE_{(x,y)\sim \cD_{so}}[(\1\{l(x)\le y\}-(1-\delta))^2]}\\
    \le&\alpha+\beta,
\end{align*}
where the last inequality uses the fact that $|\1\{l(x)\le y\}-(1-\delta)|<1$.
\end{proof}

\subsection{Proof of Theorem~\ref{thm:missing}}\label{sec:app:missing}
\begin{proof}
  Following Theorem~\ref{thm:gmc} and the fact that the potential function $\cL(f(x),y)=1/2(f(x)-y)^2$ satisfies the conditions (b)-(c) of Theorem~\ref{thm:gmc}), we have 
$$        \Big|\sup_{c\in\cC}\bE_{(x,y)\sim \cD_{so}}[c(x)\cdot(y-f(x))]\Big|\le \alpha.
$$
Since $\E_{(x,y)\sim \cD}[(y-f(x))^2]$ can be equivalently written as $\E_{(x,y)\sim \cD\mid R=1}[\frac{\bP(x\mid R_i=1)}{\bP(x)}(\E[y| x]-f(x))\cdot(y-f(x))]$, we then have \begin{align*}
    \E_{(x,y)\sim \cD}[(y-f(x))^2]\le&\alpha+|\bE_{(x,y)\sim \cD_{so}}[(c(f(x),x)-\frac{\bP(x\mid R_i=1)}{\bP(x)}(\E[y| x]-f(x)))\cdot(y-f(x))] |\\
    \le& \alpha+\sqrt{\bE_{(x,y)\sim \cD_{so}}[(c(x)-\frac{\bP(x\mid R_i=1)}{\bP(x)}(\E[y| x]-f(x)))^2]\cdot \bE_{(x,y)\sim \cD_{so}}[(y-f(x))^2]}\\
    \le&\alpha+\beta,
\end{align*}
where the last inequality uses the fact that after projection to $O=[0,1]$, $|f(x)-y|<1$.
\end{proof}

\remove{
\subsection{Details for the Sample Version Algorithms}\label{app:sample version ua}

\begin{theorem}\label{thm:one-sided}
If $\E[c^2(x)]\le B$, and take $s(f,y)=\1\{f\le y\}-(1-\delta)$ in Algorithm~\ref{alg:gmc}. In addition, suppose that $\|c\|_\infty<B$ for all $c\in\mathcal C$, and the $(\alpha/4)$-metric entropy of $\mathcal C$ under the metric $\|f\|=\sqrt{\bE[f(x)^2]}$ is $d(\mathcal C)$, and the sample size $n\gtrsim ( \frac{(d(\mathcal C)+\log(1/\epsilon))LB}{\alpha^4})$. Then for any target distribution whose marginal density function of $x$ $p_t(x)$ satisfies $\sup_{c\in\mathcal C}\|c-\frac{p_t(x)}{p_s(x)}\|\le\beta$, there exists a choice of $\eta>0$ such that with probability $1-\epsilon$, Algorithm 1 terminates in $T=O(LB/\alpha^2)$ will produce a function $l_{\delta}$ that satisfies
    $$
    \|\Prob_t(l_{\delta}(x)\le y)-(1-\delta)\|\le\alpha+\beta.
    $$%iterations from
%$O(T\cdot \frac{d(\mathcal C)+\log(1/\delta)}{\alpha^2})$ samples. 
\end{theorem}
}

\section{Relation to the existing literature}
\subsection{Relation to Omnipredictors}\label{sec:exist}

%===============================
In a recent work, \cite{gopalan2021omnipredictors} proposed to use a notion of multi-calibration to create an $(\mathcal L, \mathcal C)$ omnipredictor--- a predictor that could be used to optimize any loss in a family $\mathcal L$, that is, the output of such a predictor can be post-processed to produce a small loss compared with any hypothesis from a given function class $\mathcal G$. 

We first  state the definition of multi-calibration in \cite{gopalan2021omnipredictors}. 
\begin{definition} (\cite{gopalan2021omnipredictors})
\label{def:gopalan}
Let $\cD$ be a distribution on $\mathcal X \times\{0, 1\}$. The partition $\mathcal S=\{S_1,...,S_m\}$ of $\mathcal X$ is $\alpha$-multicalibrated for $\mathcal G$, $\mathcal D$ if for every $i \in [m]$ and $c \in \mathcal G$, the conditional distribution $\cD_i = \cD\mid x \in S_i$ satisfies \begin{equation}\label{mc:omni}
    |\Cov_{\cD_i}(c(x),y)|\le\alpha.
\end{equation}
\end{definition}

While this notion is seemingly different from the Definition~\ref{def:GMC}. \cite{gopalan2021omnipredictors} showed that when $c(x)$ are Boolean functions, Definition~\ref{def:gopalan} is equivalent to  mean-multicalibration \cite{jung2021moment}. Our next theorem further extends this relationship to real-valued function $c$, and shows that our generalized notion in Equation~\eqref{eq:1} in fact implies \eqref{mc:omni}.

\begin{theorem}
Consider a function class $\mathcal G$. Denote
$\Lambda=\{\lambda, 3\lambda, 5\lambda,  ...,1-\lambda]\}$%$\Lambda=\{[0,2\lambda],[2\lambda,4\lambda],...,[1-2\lambda,1]\}$ 
and $I_v=[v-\lambda,v+\lambda]$. Take $\mathcal I_\Lambda=\{I_v\}_{v\in \Lambda}$ and we let $\mathcal C$ include $\1\{f(x)\in I_v\}$ and $c(x) \1\{f(x)\in I_v\}$ for $c\in\mathcal G$. Assuming $|c(x)|<L$ and $\Prob(x\in I_v)\ge\gamma$.

If a predictor $f$  satisfies $\alpha$-multi-calibration with respect to $\mathcal C$ in Equation~\eqref{eq:1}  with $s(f(x),y)=f(x)-y$, then the set partition $\mathcal S_\Lambda$ defined by this predictor $f$ satisfies $(\frac{(L+1)\alpha}{\gamma}+\lambda)$-multi-calibration with respect to $\mathcal G$ in Equation~\eqref{mc:omni}. \end{theorem}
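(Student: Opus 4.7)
The plan is to unpack the two kinds of test functions that sit inside $\cC$ and translate the unconditional multicalibration bounds into conditional statements on each level set $S_v=\{x:f(x)\in I_v\}$. Concretely, applying the $\alpha$-happy multicalibration guarantee in Equation~\eqref{eq:1} (with $s(f(x),y)=f(x)-y$) to the test functions $\1\{f(x)\in I_v\}$ and $c(x)\1\{f(x)\in I_v\}$ yields
\[
\bigl|\E[\1\{f(x)\in I_v\}(f(x)-y)]\bigr|\le\alpha,\qquad \bigl|\E[c(x)\1\{f(x)\in I_v\}(f(x)-y)]\bigr|\le\alpha.
\]
Using $\Prob(x\in S_v)\ge\gamma$, both unconditional bounds convert into conditional bounds of size $\alpha/\gamma$:
\[
\bigl|\E_{\cD_v}[f(x)-y]\bigr|\le\alpha/\gamma,\qquad \bigl|\E_{\cD_v}[c(x)(f(x)-y)]\bigr|\le\alpha/\gamma.
\]

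Next I would rewrite the target covariance with a deliberate split that exposes the two accessible pieces. Let $\mu_c=\E_{\cD_v}[c(x)]$. Since $\E_{\cD_v}[c(x)-\mu_c]=0$, the constant $v$ drops out freely and
\[
\Cov_{\cD_v}(c(x),y)=\E_{\cD_v}[(c(x)-\mu_c)y]=\E_{\cD_v}[(c(x)-\mu_c)(y-f(x))]+\E_{\cD_v}[(c(x)-\mu_c)(f(x)-v)].
\]
The first term is bounded by expanding $(c(x)-\mu_c)$ and applying the two conditional inequalities above together with $|\mu_c|\le L$, giving at most $(L+1)\alpha/\gamma$. The second term is controlled by the fact that on $\cD_v$ we have $|f(x)-v|\le\lambda$ by construction of the bin $I_v$, so this term contributes at most a $\lambda$-order piece.

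Finally I would assemble the two bounds by the triangle inequality to obtain the required $\bigl(\tfrac{(L+1)\alpha}{\gamma}+\lambda\bigr)$ upper bound on $|\Cov_{\cD_v}(c(x),y)|$, uniformly over $v\in\Lambda$ and $c\in\cG$, which is exactly the condition required by Definition~\ref{def:gopalan} for the partition $\mathcal S_\Lambda$. The main obstacle is bookkeeping on the second term: to hit the clean $\lambda$ coefficient (rather than a $L\lambda$-type bound), one must exploit the mean-zero shift $c(x)-\mu_c$ against the tightly localized deviation $f(x)-v$ rather than crudely bounding $|c(x)-\mu_c|\le 2L$; this is where the use of $\E_{\cD_v}[c(x)-\mu_c]=0$, combined with the fact that $f(x)-v$ has small range on $\cD_v$, is doing the real work. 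The remaining steps are routine manipulations of expectations.
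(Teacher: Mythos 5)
Your decomposition is essentially the paper's argument, just with $v$ as the anchor inside the bin instead of $\E[f(x)\mid x\in S_v]$; the first piece matches the paper's accounting exactly and yields $(L+1)\alpha/\gamma$. The genuine gap is in the second piece, and you have correctly flagged it as the crux but then waved at a resolution that does not exist. Write it out: you need
\[
\bigl|\E_{\cD_v}[(c(x)-\mu_c)(f(x)-v)]\bigr|\le ?
\]
With $\E_{\cD_v}[c(x)-\mu_c]=0$, the best you can do is Cauchy--Schwarz: $\sqrt{\Var_{\cD_v}(c)}\cdot\sqrt{\Var_{\cD_v}(f)}\le L\cdot\lambda$ (using $|c|<L$ and $f\in[v-\lambda,v+\lambda]$), and the cruder $|c-\mu_c|\le 2L$, $|f-v|\le\lambda$ gives $2L\lambda$. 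The mean-zero centering of $c$ against a tightly localized $f-v$ simply does not cancel the $L$; there is no inequality that turns ``bounded by $L$, mean zero'' into ``bounded by $1$.'' So your claim that the second term contributes ``a clean $\lambda$ coefficient'' is an assertion, not a proof, and it is false as stated.

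Interestingly, the paper's own proof has the same blemish: its intermediate display for the third term reads $L\lambda\,\Prob(x\in S_v)$, which after dividing by $\Prob(x\in S_v)\ge\gamma$ yields $\frac{(L+1)\alpha}{\gamma}+L\lambda$, yet the theorem statement and the final line of the proof both drop the $L$ and write $\frac{(L+1)\alpha}{\gamma}+\lambda$. So you should either (i) prove the honest bound $\frac{(L+1)\alpha}{\gamma}+L\lambda$ (replacing $\lambda$ by $L\lambda$ in the conclusion, which is the correct version of the theorem), or (ii) if you insist on reproducing the stated constant, acknowledge that you are reusing a typo rather than tightening the argument. Absent one of these, the proof is incomplete precisely at the step you identified as ``where the real work'' happens.
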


\begin{proof}
%\noindent\textbf{Proof:}
Recall that $\mathcal G$ include $ \1\{f(x)\in I_v\}$ and $c(x) \1\{f(x)\in I_v\}$. Let us define $S_v=\{x:f(x)\in I_v\}.$ Then \eqref{def:GMC} implies \begin{align*}
     |\E[\1\{x\in S_v\}\cdot (f(x)-y)]|&<\alpha,\\
|\E[\1\{x\in S_v\} c(x)\cdot (f(x)-y)]|&<\alpha.
\end{align*}
Then
\begin{align*}
&|\E[\1\{x\in S_v\}c(x)\cdot(y-\E[y\mid x\in S_v]) ]|\le\alpha+|\E[\1\{x\in S_v\}c(x)\cdot(f(x)-\E[y\mid x\in S_v]) ]|\\
\le&\alpha+|\E[\1\{x\in S_v\}c(x)\cdot(\E[f(x)-y\mid x\in S_v]) ]|+|\E[\1\{x\in S_v\}c(x)\cdot(f(x)-\E[f(x)\mid x\in S_v]) ]|
\end{align*}
For the second term, we have $$
|\E[\1\{x\in S_v\}c(x)\cdot(\E[f(x)-y\mid x\in S_v]) ]|\le |\E[\1\{x\in S_v\}|c(x)|\cdot\frac{\alpha}{\Prob(x\in S_v)} ]|\le\alpha L.
$$
For the third term, since $|f(x)-\E[f(x)\mid x\in S_v]|<\lambda$ for any $x\in S_v$, we have $$
|\E[\1\{x\in S_v\}c(x)\cdot(f(x)-\E[f(x)\mid x\in S_v]) ]|\le L\lambda \Prob(x\in S_v).
$$
Combining all the pieces, we have $$
|\E[\1\{x\in S_v\}c(x)\cdot(y-\E[y\mid x\in S_v]) ]|\le (L+1)\alpha+L\lambda \Prob(x\in S_v),
$$
implying
\begin{align*}
     |\Cov(c(x),y\mid x\in S_v) |
   =&|\E[c(x)(y-\E[y\mid x\in S_v])\mid x\in S_v]|\\
   =&\frac{|\E[c(x)(y-\E[y\mid x\in S_v])\cdot \1\{x\in S_v\}]|}{\Prob(x\in S_v)}\\
   \le&\frac{(L+1)\alpha}{\gamma}+\lambda.
\end{align*}
\end{proof}
\subsection{Generalized Statistical Parity Subgroup Fairness}\label{sec:gerrymandering}

Kearns {\it et al.} \cite{kearns2018preventing} consider multi-parity (also called statistical parity subgroup fairness), in which, given a pre-specified collection of possibly intersecting demographic subgroups, the demographics of those assigned a positive outcome are the same as the demographics of the population as whole.  
In this section, we show that the constraints of statistical parity and equality of false positive (and false negative) rates can be expressed by appropriate choice of mappings in \HM .  
The multi-group versions of these guarantees can be achieved trivially via a constant function.  In this section, in contrast to \cite{kearns2018preventing}, we do not give guarantees for our algorithms.  We state our observations in anticipation of extensions of \HM{} to constrained optimization.

Following \cite{kearns2018preventing}, we aim to find randomized classifiers that satisfy this fairness notion. We consider binary classification, where $\mathcal Y=\{0,1\}$. We let $\mathcal{U}(x)=\1\{U<f(x)\}$ for uniform random variable $U\sim U(0,1)$, where $f(x)\in[0,1]$ denotes a fitted probability function such that $\cU(x)\sim Bern(f(x))$. Suppose there is a collection of demographic group indicator functions $\mathscr{G}=\{g:\mathcal X\to\{0,1\}$, where $g(x)=1$ indicates that an individual with features $x$ is in group $g$. The multi-parity requirement  says that, for all $g\in\mathscr{G}$,
\begin{equation}
    |\Prob(g(x)=1)\cdot(\Prob(\cU(x)=1)-\Prob(\cU(x)=1\mid g(x)=1))|\le\alpha.
\end{equation}

This inequality can be rewritten as $$
|\E[(\1\{g(x)=1\}-\Prob(g(x)=1))\cdot\1\{U<f(x)\} ]|\le\alpha.
$$

This notion only considers fairness auditors that are indicator functions where the range of $g$ is $\{0,1\}$.  
In the following, we generalize multiparity to allow real-valued $g$'s:
\begin{equation}\label{eq:mp:extend}
        \sup_{g\in\mathcal G}\Big|\E[ (g(x)-\E[g(x)])\cdot\1\{U<f(x)\}]\Big|\le\alpha,
\end{equation}
where $\mathcal G\subset [0,1]^{\mathcal X}$ denotes a collection of auditors  with possibly continuous ranges. As suggested by \cite{kim2019multiaccuracy}, this allows the inclusion of more flexible statistical tests.
By taking $c(f(x),x)=g(x)-\E[g(x)]$, and $s(f,y)=\1\{U<f\}$,  an application of \HM{} %Algorithm~\ref{alg:gmc} 
to this setting (with the slight modification that the \textbf{while} condition should check  $\exists g\in\cG: \bE_{U,(x,y)\sim \cD}[(g(x)-\E[g(x)])\cdot(\1\{f_t(x)> U\})]> \alpha$, where the expectation is taken over the randomness of both the data and $U$)  yields the following guarantee. 
\begin{theorem}\label{thm:mp}
Suppose that $\bE_{x\sim \cD^\cX}[c^2(x)]\le B$ for all $c\in\mathcal C$. Taking a suitably chosen $\eta=\cO(\alpha/B)$, then \HM{} (Algorithm~\ref{alg:gmc}), when run with $T=\cO( B/\alpha^2)$ and the potential function $\cL(f(x),y)=(1-\delta)\cdot f(x)-\E_U[\min(f(x)-U,0)]$ outputs a function $l(\cdot)$ satisfying
$$
\sup_{g\in\mathcal G}\E[ (g(x)-\E[g(x)])\cdot\1\{U<f(x)\}]\le\alpha.
$$
\end{theorem}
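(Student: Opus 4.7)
The plan is to invoke Theorem~\ref{thm:gmc} directly with the choices $c(f(x),x)=g(x)-\E[g(x)]$, mapping $s(f,U)=\1\{U<f\}$, $O=[0,1]$, and the stated potential $\cL$. The only twist is that $s$ depends on an auxiliary uniform random variable $U$ rather than on $y$: I treat $(y,U)$ jointly as the ``response'' so that the joint expectation in Assumption~(b) is taken over $(x,y,U)$, and note that the modified \textbf{while} condition in the theorem statement already uses this joint expectation. Once the three generic assumptions (a)--(c) are verified, the iteration bound $T=\cO(B/\alpha^2)$, the step size $\eta=\cO(\alpha/B)$, and the final guarantee all follow immediately from Theorem~\ref{thm:gmc}.

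For Assumption~(a), since every $g\in\cG$ takes values in $[0,1]$, the centered function $c(x)=g(x)-\E[g(x)]$ lies in $[-1,1]$ and the hypothesis $\E[c^2(x)]\le B$ gives the required second-moment bound. For Assumption~(c), the potential $\cL(\cdot,y)$ is convex in its first argument (the linear piece is trivially convex, and $-\E_U[\min(v-U,0)]=\E_U[\max(U-v,0)]$ is convex in $v$) with a minimizer inside $[0,1]$, so projecting any $v\in\bR$ onto $O=[0,1]$ cannot increase $\cL$.

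The substantive step is Assumption~(b). Although the realization $\min(f-U,0)$ is only piecewise linear in $f$ with a kink at $f=U$, the expectation over $U\sim U(0,1)$ smooths it out: a direct calculation gives $\E_U[\max(U-f,0)]=(1-f)^2/2$ for $f\in[0,1]$, so $\E_U[\cL(f,y)]$ is a quadratic in $f$ with second derivative equal to $1$. Correspondingly $\E_U[s(f,U)]=f$, matching the anti-derivative viewpoint preceding Theorem~\ref{thm:gmc}. Combining these yields, for any $f,\tilde f:\cX\to[0,1]$,
\begin{equation*}
\E\bigl[\cL(f(x),y)-\cL(\tilde f(x),y)\bigr]-\E\bigl[(f(x)-\tilde f(x))\,s(f(x),U)\bigr]=-\tfrac{1}{2}\,\E\bigl[(f(x)-\tilde f(x))^2\bigr],
\end{equation*}
which verifies Assumption~(b) with $\kappa_\cL=1/2$. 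Since $f,U\in[0,1]$, the potential is uniformly bounded, giving $C^u_\cL-C^l_\cL=\cO(1)$.

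Putting the three verifications together and invoking Theorem~\ref{thm:gmc} produces the stated $T=\cO(B/\alpha^2)$ and $\eta=\cO(\alpha/B)$, and the output $f$ satisfies $\sup_{g\in\cG}\E[(g(x)-\E[g(x)])\cdot\1\{U<f(x)\}]\le\alpha$ as required. I expect the main obstacle to be the careful handling of the non-smooth piecewise-linear potential: one must take the expectation over $U$ \emph{before} differentiating in $f$, so that the kink at $U=f$ is averaged out into the smooth quadratic $(1-f)^2/2$, which is what enables the smoothness-like inequality underlying the entire proof scheme of Theorem~\ref{thm:gmc}.
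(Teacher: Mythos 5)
Your proof is correct and arrives at the same conclusion as the paper, but by a cleaner route. The paper dispatches Theorem~\ref{thm:mp} in one line, noting that the proof of Theorem~\ref{thm:conformal} carries over with $U$ in place of $y$: there, the smoothness-like inequality is established by a pointwise decomposition of $\cL(l,y)-\cL(l',y)$ and bounding the cross term by the density bound $K_p$; since $U\sim U(0,1)$ has density exactly $1$, that gives $\kappa_\cL=K_p=1$. You instead integrate out $U$ first, computing $\E_U[\cL(f,\cdot)]=(1-\delta)f+(1-f)^2/2$ on $[0,1]$, so that the averaged potential is an explicit quadratic in $f$ and the smoothness constant can be read off directly, giving $\kappa_\cL=1/2$. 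Both routes are valid; yours is tidier here precisely because the auxiliary randomness $U$ has a closed-form, flat distribution, and it even saves a factor of $2$ in the constant.

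One small gap to close. Assumption~(b) has to hold for $f_t\in[0,1]$ but $\tilde f=f_t-\eta c_t$ possibly outside $[0,1]$ (the pre-projection iterate), whereas you state the quadratic identity only ``for any $f,\tilde f:\cX\to[0,1]$.'' The fix is immediate: with $L(v):=\E_U[\cL(v,\cdot)]=(1-\delta)v+\E_U[\max(U-v,0)]$ one computes $L(v)=-\delta v+\tfrac12$ for $v<0$, $L(v)=(1-\delta)v+(1-v)^2/2$ for $v\in[0,1]$, and $L(v)=(1-\delta)v$ for $v>1$, so $L$ is $C^1$ with $|L''|\le 1$ everywhere, i.e.\ globally $1$-smooth, and the needed inequality holds with $\kappa_\cL=1/2$ for all $\tilde f\in\R$. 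Also note that the theorem as written uses $s(f,U)=\1\{U<f\}$ without the $-(1-\delta)$ shift, whereas your anti-derivative matching requires $s(f,U)=\1\{U<f\}-(1-\delta)$; you should say explicitly that the two are interchangeable here because $c(x)=g(x)-\E[g(x)]$ is centered, so the constant shift contributes $(1-\delta)\E[c(x)]=0$ in every expectation that appears in the while-condition, in the progress bound, and in Assumption~(b).
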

\begin{proof}[Proof of Theorem~\ref{thm:mp}]
The proof of Theorem~\ref{thm:mp} directly follows the proof of Theorem~\ref{thm:conformal}, by replacing the $y$ there by $U$. The same derivation still apply to this case, and we omit the details here. 
\end{proof}
\begin{remark}
We can similarly generalize the false positive subgroup fairness notion considered in \cite{kearns2018preventing}, that is, $
|\Prob(g(x)=1, y=0)\cdot(\Prob(\cU(x)=1)-\Prob(\cU(x)=1\mid g(x)=1))|\le\alpha,
$ to the case where we consider all groups that can be identified by certain classifiers:
$        \sup_{g\in\mathcal G}\Big|\E[ (g(x)-\E[g(x)\mid y=0])\cdot\1\{U<f(x)\}\mid y=0]\Big|\le\alpha.
$ Applying \HM{} %the Algorithm described above 
to the distribution $\mathcal D\mid y=0$ yields a classifier that satisfies this constraint.
\end{remark}

\subsection{Comparison between marginal coverage and calibrated coverage}\label{sec:comp}

In this section, we compare the notion in Eq.~\eqref{eq:conformal:extend} and multivalidity, in the setting where $\mathcal C=\{1\}$. In this special case, we call them
 marginal coverage and calibrated coverage respectively. We first introduce some basics and definitions as below.

 Suppose $y$ is a continuous random variable. For $\delta>0$, we aim to construct the $(1-\delta)$ prediction interval $C_n(x)=(l(x), u(x))$ based on $n$ i.i.d. samples $(x_i,y_i)\sim P_{x,y}$, the joint distribution of $(x,y)$. Following the set-up in \cite{gupta2021online, bastani2022practical}, the prediction interval $C_n(x)$ is constructed based on a non-conformity score $m(x,y)$, eg. $|y-h(x)|$ for an arbitrary prediction function $h$, and a conformity threshold function $q(x)$. Then $C_n(x)=\{y: m(x,y)<q(x)\}$.
 
The marginal coverage guarantee asks that $C_n(x)$  satisfies \begin{equation}\label{eq:marginal}
        \Prob(y\in C_n(x))=1-\delta,
    \end{equation}
    for all $P_{x,y}$, {where the probability is taken over all training samples and $(x,y)\sim P_{x,y}$}. 
        
    The calibrated coverage guarantee asks that $C_n(x)$ satisfy \begin{equation}\label{eq:calibrated}
    \Prob(y\in C_n(x)\mid q(x)=q)=1-\delta,
        \end{equation}
        for all $P_{x,y}$ and almost all $q$.
We then have the following theorem. 
\begin{theorem}\label{thm:length}
Suppose that an interval $C_n(x)$ {constructed based on $n$ samples}
has $1-\delta$ calibrated coverage in the sense of condition~\eqref{eq:calibrated}. Then for any $P_{x,y}$, with probability 1,
$$
|C_n(x)|=\infty,
$$
at almost all points $q$ that $q(x)=q$ is non-atom of $P_{q}$, where $P_q$ denotes the marginal distribution of $q(x)$.

Here, a point $q$ is a non-atom for $P_q$ if $q$ is in the support of $P_q$ and if $\Prob(B(q,\epsilon))\to 0$ as $\epsilon\to 0$, where $B(q,\epsilon)$ is the Euclidean ball centred at $q$ with radius $\epsilon$.
\end{theorem}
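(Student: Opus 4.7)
The plan is to argue by contradiction along the lines of the Lei--Wasserman/Vovk impossibility argument for distribution-free conditional coverage. Suppose the conclusion fails: there exists a distribution $P_{x,y}$ and a measurable set $Q_0$ of non-atoms of $P_q$ with $P_q(Q_0) > 0$ such that, with positive probability over the training sample $T_n \sim P^n$ and an independent test input $x^* \sim P_x$, the event $q(x^*) \in Q_0$ and $|C_n(x^*)| < \infty$ occurs. By inner regularity applied jointly in $T_n$, $x^*$, and $q_0 \in Q_0$, we may shrink $Q_0$ and fix a constant $M > 0$ such that, on a sub-event of positive probability, both $q(x^*) \in Q_0$ and $C_n(x^*) \subseteq [-M, M]$. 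This localization is the object we will contradict.

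The heart of the proof is to construct a second distribution $\tilde P$, close to $P$ in total variation, whose conditional law of $y$ given $q(x) = q_0$ is concentrated outside $[-M, M]$ for a positive-$P_q$-measure collection of $q_0 \in Q_0$. Concretely, for a small $\varepsilon > 0$, let
\[
  \tilde P \;=\; (1-\varepsilon)\, P \;+\; \varepsilon\, Q,
\]
where $Q$ places a point mass on pairs $(x, y^\dagger)$ with $y^\dagger \notin [-M, M]$ and with $x$ chosen (conditional on typical $T_n$) so that $q(x)$ concentrates near $Q_0$. Under the natural IID coupling, the $n$ training draws from $P$ and from $\tilde P$ coincide with probability $(1-\varepsilon)^n$, and on this high-probability event the learned threshold $q$ and prediction interval $C_n$ are the \emph{same functions} on either side; in particular the positive-probability event $\{q(x^*) \in Q_0,\, C_n(x^*) \subseteq [-M,M]\}$ persists under $\tilde P$ with only an $o_\varepsilon(1)$ loss.

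Now disintegrate under $\tilde P$: the regular conditional law of $y$ given $q(x) = q_0$ is, for $\tilde P_q$-almost every $q_0$ in a set of positive measure around $Q_0$, a convex combination of the $P$-conditional law (bounded weight) and the point mass at $y^\dagger$ (weight bounded below by a function of $\varepsilon$ and the local $Q$-density). Hence for such $q_0$ we have
\[
  \Prob_{\tilde P}\bigl(y \in C_n(x) \,\big|\, q(x) = q_0\bigr) \;\le\; 1 - c(\varepsilon)
\]
on the persisting event, for some $c(\varepsilon) > 0$. Choosing $\varepsilon$ small enough that $c(\varepsilon)$ exceeds the slack, this bound together with the positive-probability event from the first paragraph violates the calibrated coverage identity \eqref{eq:calibrated} applied to $\tilde P$, giving the required contradiction.

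The main obstacle is that $q$ is a function learned from $T_n$, so the statement ``$q(x) = q_0$ is a non-atom of $P_q$'' mixes the randomness of $T_n$ with that of the test point. The perturbation $Q$ must be constructed measurably as a function of $T_n$ so that (i) the coupling preserves the learned $C_n$ on a set of probability $(1-\varepsilon)^n$, (ii) non-atoms of $P_q$ remain non-atoms of $\tilde P_q$ after the convex combination, and (iii) the residual conditional mass at $y^\dagger$ survives the disintegration at a non-atom point; the last step in particular invokes a Lebesgue-differentiation or martingale-convergence argument for regular conditional probabilities. A secondary technicality is choosing $M$ and $y^\dagger$ uniformly over the positive-probability event, which is handled by the preliminary inner-regularity localization.
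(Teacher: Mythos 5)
Your proposed argument takes the ``escape-to-infinity'' route: perturb $P$ to $\tilde{P}$ by adding a small mass at a point $y^\dagger$ far outside the prediction interval, so that the conditional coverage at $q_0$ drops strictly below $1-\delta$. The paper instead takes the Lei--Wasserman/Vovk ``spread-the-density'' route: replace the conditional law of $y$ on the small slab $\{q(x)\in B(q(x_0),\delta_0)\}$ by a uniform distribution on $[-B_0,B_0]$ with $B_0 = B/(2(1-\delta))$, so that the conditional density is $\le 1/(2B_0)$; then {\em any} set whose coverage is $\ge 1-\delta$ must have length at least $2B_0(1-\delta)=B$, and since $B$ and the TV slack $\epsilon$ are arbitrary, $|C_n(x)|=\infty$ a.s.\ on that slab. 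These are genuinely different perturbations.

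The gap in your version is precisely the thing the paper's construction is designed to sidestep: to make the miscoverage at $q_0$ drop, you must place $y^\dagger$ outside $C_n(x)$ for essentially all $x$ that the perturbation weights with $q(x)$ near $q_0$. Your inner-regularity localization produces a single $M$ such that $C_n(x^*)\subseteq[-M,M]$ on a positive-probability event in $(T_n,x^*)$. But the conditional coverage $\Prob_{\tilde{P}}(y\in C_n(x)\mid q(x)=q_0)$ averages over the whole fiber $\{x : q(x)=q_0\}$, on which $C_n(x)$ can vary (the length is pinned by $q_0$ in the nonconformity-score setting, but the {\em location} of $C_n(x)$ depends on $x$), and $Q$ must be supported on (a neighborhood of) that fiber for the convex-combination weight $c(\varepsilon)$ to be positive at $q_0$. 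You would therefore need the uniform bound $C_n(x)\subseteq[-M,M]$ to hold for a.e.\ $x$ in $\{q(x)\in Q_0\}$, jointly over a positive-measure set of training data, which your inner-regularity step does not deliver; establishing it is essentially a separate argument of the same difficulty as the theorem. The paper's uniform replacement is ``interval-agnostic'': once the density is bounded by $1/(2B_0)$, the length lower bound follows without any knowledge of where $C_n(x)$ sits. You also have a balancing problem: the weight on the bad mass in the disintegration at $q_0$ is of order $\varepsilon$ (up to a local Radon--Nikodym factor), while the cost of decoupling the training set is of order $1-(1-\varepsilon)^n\approx n\varepsilon$, so you must argue the local density factor beats $n$, which is not addressed. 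The paper avoids this trade-off because its density bound is deterministic given the slab, so only the TV control $TV(P^n,Q^n)\le\epsilon$ (with the careful choice $\epsilon_n=2\{1-(1-\epsilon^2/8)^{1/n}\}$) is needed.
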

We note that the key part of this Theorem is the ``finite-sample'' and view $C_n(\cdot)$ as an interval produced by applying a certain method to the training samples. When a certain method produces an interval from $n$ samples and needs to satisfy \eqref{eq:calibrated} for a large class of distributions, Theorem~\ref{thm:length} suggests such an interval has to have infinite length. In contrast, the standard conformal prediction methods, such as the split conformal \cite{shafer2008tutorial}, produces a finite-length interval that satisfies Equation~\eqref{eq:marginal} for all $P_{x,y}$.
\begin{remark}
In \cite{bastani2022practical}, some regularity conditions on $P_{x,y}$ are imposed, making the class {of} distributions smaller and therefore Theorem~\ref{thm:length} does not apply. 
This suggests that some regularity conditions are necessary for calibrated coverage, but not for marginal coverage. Furthermore, 
 under certain additional regularity conditions, calibrated coverage would produce short intervals and rule out long ones. For example, let us assume $y\in[-1,1]$. Taking the nonconformity score $m(x,y)=|y|$, we have that all intervals that satisfy \eqref{eq:calibrated} must have length less than 2. This is due to the fact that $q(x)$ has to be smaller than 1, otherwise conditioning on $q(x)=q$ for $q\ge 1$ would make the left hand side of \eqref{eq:calibrated} equal to 1. In contrast, the intervals that satisfy \eqref{eq:marginal} might have infinite length. For example, taking $C_n(x)=\R$ for $(1-\delta)\times 100\%$ samples of $x$, and $C_n(x)=0$ otherwise. Such a $C_n(x)$ satisfies \eqref{eq:marginal}, but not the stronger requirement \eqref{eq:calibrated}. 
\end{remark}
\begin{remark}
We would also like to point out that Theorem~\ref{thm:length} only holds for all non-atomic points in $P_q$. It suggests that discretization of $q(x)$ is necessary to achieve calibrated coverage (or multivalid) under weak assumptions. \end{remark}

\begin{proof}[Proof of Theorem~\ref{thm:length}] The key idea of this proof follows \cite{lei2014distribution} and \cite{foygel2021limits}. Let us prove the theorem for any given distribution $P_q$. For ease of presentation, we omit the subscript $q$ and write the distribution as $P$. We first fix arbitrary $\epsilon, B>0$. 

For a non-atom point $x_0$, there exists a $\delta_0>0$ such that $\Prob_P(q(x)\in B(q(x_0),\delta_0)<\epsilon_n$ where $\epsilon_n=2\{1-(1-\epsilon^2/8)^{1/n}\}$. We also let $B_0=\frac{B}{2(1-\delta)}$.

We then define a new distribution $Q$, by letting $$
\Prob_Q(A)=\Prob_P(A\cap S^c)+\Prob_U(A\cap S),
$$
where $S=\{(x,y): q(x)\in B(q(x_0),\delta_0)\}$ and $U$  is uniform in $\{(x,y): q(x)\in B(q(x_0),\delta_0), |y|\le B_0\}$ and has total mass $\Prob_P(S)$.

By definition, we can verify $TV(P,Q)\le\epsilon_n$, and therefore $TV(P^n, Q^n)\le\epsilon$.

Since \eqref{eq:calibrated} also holds for $Q$, we then have $$
\int_{C_n(x)} dQ(y\mid q(x)) \ge 1-\delta,
$$
for all $x$ such that $q(x)\in B(q(x_0),\delta_0)$. For such $x$'s, we have $|dQ(y\mid q(x))|\le \frac{1}{2B_0}$, implying $|C_n(x)|\ge 2B_0(1-\delta)=B$. Therefore, for such $x$'s $$
\Prob_Q(|C_n(x)|>B)=1,
$$
and it follows that $$
\Prob_P(|C_n(x)|>B)\ge \Prob_Q(|C_n(x)|>B)-\epsilon= 1-\epsilon.
$$

Since $\epsilon$ and $B$ are arbitrary, we complete the proof. 
\end{proof}

\section{Function Class Complexity Reduction}
\label{sec:reduction}

In this section, we briefly discuss how to reduce the complexity of the class $\cC$ in some cases. We use tools from representation learning recently developed by the machine learning community \cite{tripuraneni2020theory,ji2021unconstrained,ji2021power,nakada2023understanding,ye2022freeze}, which was used to improve generalization \cite{kawaguchi2022understanding,yao2022improving,yao2021improving,yao2021meta, yao2022c}, robustness \cite{deng2021adversarial}, and fairness \cite{deng2020representation,burhanpurkar2021scaffolding}. Indeed, in our applications, if $\cC$ is a very large class, it will result in the inefficiency of Algorithm~\ref{alg:gmc} since we need to search for $c_t$ in a rich class at each iteration. 

Specifically, we will focus on the case for using $\cC$ to capture the propensity score ratio
$$\frac{1-e(x)}{e(x)}=\frac{\bP(z=ta|x)}{\bP(z=so|x)}.$$
Here, $\cC$ is taken as $\cC=\{\frac{1-\sigma(x)}{\sigma(x)},\sigma\in\Sigma\}$ and $\sigma:\cX\mapsto [0,1]$. We remark here that there exists a dilemma in choosing $\Sigma$. On one hand, if $\Sigma$ is not rich enough, then the class may not be expressive enough and it is highly possible that $e(x)\notin\Sigma$. On the other hand, if $\Sigma$ is too rich, Algorithm~\ref{alg:gmc} may face inefficiency problem. Is there a way to make $\Sigma$ expressive while keep the size of $\Sigma$ moderate? In this section, we explore the possibility to use pre-trained models to achieve expressiveness while keeping the class size moderate. 

Specifically, the gist of using pre-trained model is that for a function class  $\cG=\{g=w\circ h,w\in\cW,h\in\cH\}$, where $\cW$ is of low complexity and $\cH$ is of high complexity, we can use some related and abundant data set to pre-train $h$ and obtain $\hat h$. Then, when it comes to the data set $D$ we truly want to apply $g$ to, we only need to select the final $g$ from the class $\cG_{shrink}=\{g=w\circ \hat{h},w\in\cW\}$. This is quite a common scheme in transfer learning \cite{tripuraneni2020theory}.

\noindent\textbf{Pre-training setup.} Now, we demonstrate how to use transfer learning to pre-train $h$. For simplicity, let us consider $g(x)=w^Th(x)$, that is $g$ is a composed of a linear transformation and a possibly complex function $h$. This is quite common in modern learning, where $h$ is usually called representation. If $h$ is pre-trained, it is called fix-feature learning. Let us consider $\sigma(x)=e^{g(x)}/(1+e^{g(x)})$ so that $\Sigma =\{\frac{e^{g(x)}}{1+e^{g(x)}}:g=w^T\hat h,w\in\mathcal {W}\}$, where $\hat h$ is a learned representation via transfer learning.

To use the idea of transfer learning, we assume 
$$e(x)\in \Sigma_{h^*}:=\{\frac{e^{g(x)}}{1+e^{g(x)}}:g=w^Th^*,w\in\mathcal {W}\}.$$
To obtain $\hat h$, consider there are $k$ mutually independent source-target data set pairs $\{(D^{so}_{j},D^{ta}_j)\}_{j=1}^k$ and $D^{so}_j$ and $D^{ta}_j$ are also independent. $D^{so}_j$ consists of pair of form $(x,1)$ and $D^{ta}_j$ consists of pair of form $(x,0)$, here $0,1$ are labels. Let us denote the distribution of the augmented dataset $\{(D^{so}_{j},D^{ta}_j)\}_{j=1}^k$ as $\{\cD_i\}_{i=1}^k$. The propensity score ratios of $\{(\cD_i\}_{i=1}^k$ all share the same representation $h^*$ in the sense that $\cD_{j}\sim Bern(\rho(w_{j}^T h^{*}))$, which means
$$\bP(z=so|x)=\rho(w_j^Th^*(x)),$$
where $\rho(x)=e^x/(1+e^x)$.

To further understand why such setting is reasonable, as a concrete illustration in real life, let us consider the source data is coming from a hospital in Boston and the target is to apply the trained model to a hospital in San Francisco. We can choose data from some other hospitals from Boston and data from some hospitals from  SF and pair them as source-target pairs. Usually, in meta-learning, people can use those source-target data pairs to train models sharing the same representation with additional pair specific simple structures for each pair. This works well in practice. That motivates a series theoretical work based on the model setting we consider here.

Without loss of generality, we assume all data set pairs $(D^{so}_j,D^{ta}_j)$ have $n$ samples in total, and we denote the samples in $(D^{so}_j,D^{ta}_j)$ as $\{(x_{ji},y_{ji})\}_{i=1}^n$. Here, the label $y_{ji}$'s are binary, so we can be viewed as obtaining $\rho(w_j^Th^*(x))$ as classification problem, and a commonly used loss is the cross entropy loss $\ell(z,y)=-y\log(\frac{e^z}{1+e^z})$. So, we can obtain $\hat h$ from $\cH$ by the following optimization:

$$(\{\hat w_{j}\}_{j=1}^j,\hat h)=\argmin_{\{w_j\},h\in\cH}\frac{1}{nk}\sum_{j=1}^k\sum_{i=1}^n \ell(w_j\circ h(x_{ji}),y_{ji}).$$

\noindent\textbf{Derivation.} Let us first introduce some notations and definitions. The derivation mainly follows \cite{tripuraneni2020theory}, we include all details here for completeness.

\begin{definition}[Gaussian complexity and variants]
For a generic function class $\cQ$ containing $q(\cdot)$, and $N$ data points, $\bar{X}=(x_1,\cdots,x_N)^T$, the empirical Gaussian complexity is defined as 
$$\hat {\mathscr{O}}_{\bar{X}}(\cQ)=\bE_g[\sup_{q\in\cQ}\frac{1}{N}\sum_{k=1}^r\sum_{i=1}^Ng_{ki}q_k(x_i)],~g_{ki}\sim\cN(0,1)~i.i.d.,$$
where $g=\{g_{ki}\}_{k\in[r],i\in[N]}$, and $q_k(\cdot)$ is the $k$-th coordinate of the vector-valued function $q(\cdot)$. The corresponding population Gaussian complexity as $\sO_N(\cQ)=\bE_{\bar{X}}[\hat{\sO}_{\bar{X}}(\cQ)]$, where the expectation is taken over the distribution of data samples $\bar{X}$.
\end{definition}

\begin{definition}
For a function class $\cW$, $k$ vectors $w=(w_1,\cdots,w_k)^T$, and data $(x_j,y_j)\sim Bern(\rho(w_j^Th^*(x_j)))$, let us define
$$\bar{d}_{\cW,w}(h,h')=\frac{1}{k}\sum_{j=1}^k\inf_{w'\in\cW}\bE_{(x_j,y_j)}\{\ell(w'^Th'(x_j),y_j)-\ell(w_j^Th^*(x_j),y_j)\}.$$
\end{definition}

We first provide a theoretical guarantee of $\hat h$ under $\bar{d}_{\cW,w}(\hat h,h^*)$. In order to do that, let us further define \textit{worst-case Gaussian complexity} over $\cW$:
$$\bar{\sO}_n(\cW)=\max_{Z\in\cZ}\hat{\sO}_Z(\cW),$$
where $\cW=\{h^*(x_1),h^*_2(x_2),\cdots,h^*(x_n)|h^*\in\cH,x_i\in\cX$ for all  $i\in[n]\}$. Also, we let $\sO_{nk}=\bE_{X}[\hat{\sO}_X(\cQ)]$ to denote the population Gaussian complexity computed with respect to the data matrix $X$ formed from the concentration of the $nk$ training data points $\{x_{ji}\}_{j.i}$.

\noindent\textbf{Assumption.} For all $w\in\cW$, $\|w\|\le L$. $\cD_{i}\sim Bern(w_{i}^T h^{*})$, where $w_{i}\in \cW$, $h^*\in \cH$. $x_{ji}\in \cX$. We further assume for any $h\in\cH$, $h(x)$ is bounded for all $x\in\cX$, i.e., $\sup_{x\in\cX}\|h(x)\|\le B_h$. Based on that, we know $|w^T h|\le \|w\|\|h\|\le LB_h$.

\begin{theorem}\label{thm:dist}
With probability $1-\delta$,
$$\bar{d}_{\cW,w}(\hat h, h^*)\lesssim \cO\Big(\frac{LB_h}{(nk)^2}+\log(nk)[L\cdot \sO_{nk}(\cH)+\bar{\sO}_{n}(\cW)]+\sqrt{\frac{\log(2/\delta)}{nk}}\Big).$$
Here $\sO_{nk}$ and $\bar{\sO}_n$ is the population Gaussian compelxity and worst case Gaussian complexity.
\end{theorem}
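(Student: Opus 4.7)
The plan is to follow the template of Tripuraneni--Jordan--Jin for transfer learning with shared representations, adapting it to the logistic / cross-entropy loss used here. I begin by exploiting the empirical optimality of $(\hat w, \hat h)$: for the true $(w, h^*)$,
\[
\frac{1}{nk}\sum_{j=1}^k\sum_{i=1}^n \ell(\hat w_j^\top \hat h(x_{ji}), y_{ji}) \le \frac{1}{nk}\sum_{j=1}^k\sum_{i=1}^n \ell(w_j^\top h^*(x_{ji}), y_{ji}).
\]
Taking expectations and adding / subtracting population risks, $\bar d_{\cW,w}(\hat h, h^*)$ is bounded by twice the uniform deviation
\[
\sup_{w'\in\cW^k,\, h\in\cH} \Bigl|\tfrac{1}{nk}\sum_{j,i}\ell(w_j'^{\top}h(x_{ji}),y_{ji}) - \bE\,\ell(w_j'^{\top}h(x_{ji}),y_{ji})\Bigr|.
\]

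Next I would symmetrize to Gaussian (or Rademacher) complexity of the loss class $\cF=\{(x,y)\mapsto \ell(w^\top h(x),y): w\in\cW^k,h\in\cH\}$. Because $\ell(z,y)=-y\log\rho(z)-(1-y)\log(1-\rho(z))$ with $|\partial_z\ell|\le 1$ on the bounded range $|z|\le LB_h$, Ledoux--Talagrand contraction removes the loss function at the cost of a Lipschitz factor, leaving the complexity of $\{w^\top h(\cdot)\}$. The key structural step is a chain-rule / decoupling bound for Gaussian complexity of the composition class: on any fixed design, $\hat\sO_X(\{w^\top h\}) \lesssim \log(nk)\bigl[L\cdot \hat\sO_X(\cH)+\bar\sO_n(\cW)\bigr]$, where the $\log(nk)$ factor arises from a Dudley / peeling argument that separates the inner supremum over $h$ from the per-task supremum over the $k$ vectors $w_j$. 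Plugging this back yields the two complexity terms in the bound. High-probability control is obtained by a bounded-differences (McDiarmid) concentration of the uniform deviation, producing the $\sqrt{\log(2/\delta)/(nk)}$ term, and the small $LB_h/(nk)^2$ slack comes from a standard discretization of the bounded function class $\cF$ used when invoking Dudley.

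The main obstacle is the chain-rule decomposition of $\sO_{nk}(\{w^\top h\})$ into a piece controlled by $\sO_{nk}(\cH)$ and a piece controlled by the worst-case $\bar\sO_n(\cW)$. The naive bound mixes $w$ and $h$ together and does not exhibit the $k$-independent representation term needed for the benefit of transfer to appear; handling this requires fixing an $\epsilon$-net over $\cW$ at one scale, applying the contraction / inner Gaussian bound for $\cH$ per net-point, then paying a logarithmic price to union bound across scales—this is where the $\log(nk)$ factor enters. A secondary subtlety is that the per-task linear heads $w_j$ differ across tasks, so one must carefully use the worst-case (rather than expected) Gaussian complexity $\bar\sO_n(\cW)$ to upper bound the per-task inner suprema uniformly in the representation output.

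Finally I would tie the pieces together: combine the contraction bound, the chain-rule Gaussian complexity bound, and McDiarmid concentration, and absorb Lipschitz constants (bounded by $LB_h$ through the range of $w^\top h$) into the leading constants to obtain the stated bound. The lower-order $LB_h/(nk)^2$ term is then just the discretization residual and the $\sqrt{\log(2/\delta)/(nk)}$ is the concentration tail, matching the form of the claimed inequality.
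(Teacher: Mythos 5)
Your proposal and the paper's proof share the same underlying mathematics: both rest entirely on the representation-learning generalization theorem of Tripuraneni, Jordan, and Jin. The difference is in how much is re-derived. The paper's proof is essentially one paragraph: it checks that the cross-entropy loss $\ell(z,y)=-y\log\rho(z)$ is bounded and $1$-Lipschitz in $z$ on the range $|z|\le LB_h$ (since $|\partial_z\ell|=|{-y}/(1+e^z)|\le 1$), that the linear head $x\mapsto w^\top x$ is $L$-Lipschitz because $\|w\|\le L$, and that $|w^\top h(x)|\le LB_h$ is bounded, and then invokes Theorem~1 of \cite{tripuraneni2020theory} as a black box to get the stated bound. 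You instead propose to re-derive that theorem from scratch: empirical optimality of $(\hat w,\hat h)$, reduction to a uniform deviation, symmetrization plus Ledoux--Talagrand contraction to strip off the loss, a chain-rule decomposition of the Gaussian complexity of the composition class $\{w^\top h\}$ into an $L\cdot\sO_{nk}(\cH)$ piece and a $\bar\sO_n(\cW)$ piece with a $\log(nk)$ overhead, and finally McDiarmid concentration for the $\sqrt{\log(2/\delta)/(nk)}$ tail.

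The structural outline you give is correct and does recover all three terms in the bound, but it is considerably heavier than what the paper actually does, and the central technical ingredient—the chain-rule / decoupling bound $\hat\sO_X(\{w^\top h\})\lesssim\log(nk)[L\cdot\hat\sO_X(\cH)+\bar\sO_n(\cW)]$—is only gestured at (``Dudley / peeling argument,'' ``fix an $\epsilon$-net over $\cW$ at one scale,'' etc.) rather than proved. That decomposition is precisely the nontrivial lemma in Tripuraneni--Jordan--Jin, so in practice your route would still need to cite it or reproduce a page of their analysis. The takeaway for your write-up: once you recognize the result as an instance of their Theorem~1, the cleaner and safer move is what the paper does—verify the boundedness/Lipschitz hypotheses (which is where the $L$, $B_h$, and logistic-loss facts enter) and apply the theorem directly, rather than re-deriving the whole chain and risking gaps in the Gaussian-complexity chain rule.
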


\begin{proof}
The proof follows from Theorem 1 in \cite{tripuraneni2020theory}. The only thing we need to do is to verify those assumptions in Theorem 1 in \cite{tripuraneni2020theory}.

\begin{itemize}
    \item The loss function $\ell(z,y)=-y\log(\frac{e^z}{1+e^z})$ is bounded and $\ell(\cdot,y)$ is Lipchitz for all $y\in\cY$. 
    \item The function $w^\top x$ is Lipchitz with respect to $x$ in $\ell_2$ distance for all $w\in\cW$.
    \item The composed function $w^T h$ is bounded for all $w\in\cW,h\in\cH$.
\end{itemize}

Now, let us prove point by point.

First, by assumption $|w^T h|\le LB_h$, and $y\in\{0,1\}$, we know $\ell(z,y)$ is bounded for $z=w^T h(x)$ and $y\in\{0,1\}$. Then, notice $\partial_z \ell(z,,y)=\frac{-y}{1+e^z}$, and since $z$ is bounded and $y\in\{0,1\}$, we have $|\partial \ell(z,y)|\le 1$. As a result, $\ell(\cdot,y)$ is $1$-Lipchitz.

Secondly, we know that $\|w\|\le L$, so we know $w^\top x$ is $L$-Lipchitz.

Lastly, $w^T h$ is bounded by our assumption.

Then, we directly apply Theorem 1 in \cite{tripuraneni2020theory}.
\end{proof}

Now, we provide our final guarantee.

\begin{definition}
For function classes $\cW$ and $\cW_0$ such that $w_0\in \cW_0$, and data $(x,y)\sim \bP_{w_0^Th}$, the worst-case representation difference between $h,h'\in\cH$ is
$$d_{\cW,\cW_0}(h';h)=\sup_{w_0\in\cW_0}\inf_{w'\in \cW}\bE_{x,y}\{\ell({w'}^Th'(x),y)-\ell({w_0}^Th(x),y)\}.$$
\end{definition}

\begin{definition}[tasks $(\nu,\epsilon)$-diverse]
For a function class $\cW$, we say $w=(w_1,\cdots,w_k)$ are $(\nu,\epsilon)$-diverse over $\cW_0$ for $h$ if uniformly for all $h'\in\cH$,
$d_{\cW,\cW_0}(h';h)\le \bar{d}_{\cW,w}(h';h)/\nu+\epsilon$.
\end{definition}

Then, we have the final guarantee straightforwardly if we have the tasks $(\nu,\epsilon)$-diverse by Theorem~\ref{thm:dist}.
\begin{theorem}
If $h^*\in\cH$, under previous assumptions, and the training tasks are $(\nu,\epsilon)$-diverse, with probability $1-\delta$, for the unseen task $(w^*_0,h^*)$
\begin{align*}
&\inf_{w'\in \cW}\bE_{x,y}\{\ell({w'}^T\hat h(x),y)-\ell({w^*_0}^Th^*(x),y)\}\le \\
&\cO\Big(\frac{LB_h}{(nk)^2}+\log(nk)[L\cdot \sO_{nk}(\cH)+\bar{\sO}_{n}(\cW)]+\sqrt{\frac{\log(2/\delta)}{nk}}\Big)
\Big)/\nu+\epsilon.
\end{align*}
\end{theorem}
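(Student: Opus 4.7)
The plan is to chain together the two ingredients set up immediately before the theorem: the $(\nu,\epsilon)$-diversity definition, which controls the worst-case representation difference on any unseen task by the average training-task representation difference, and Theorem~\ref{thm:dist}, which bounds that average representation difference for the specific $\hat h$ produced by the pre-training ERM.

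First I would observe that for the unseen task $(w_0^*, h^*)$ with $w_0^*\in\cW_0$, the quantity to be bounded,
$$\inf_{w'\in\cW}\bE_{x,y}\{\ell({w'}^T\hat h(x),y)-\ell({w_0^*}^T h^*(x),y)\},$$
is at most $d_{\cW,\cW_0}(\hat h; h^*)$ by the supremum in the definition of $d_{\cW,\cW_0}$. This is the step that lets us pass from a fixed unseen $w_0^*$ to the worst-case representation difference object.

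Next, I would invoke the assumption that the training tasks $w=(w_1,\dots,w_k)$ are $(\nu,\epsilon)$-diverse over $\cW_0$ for $h^*$, applied to $h'=\hat h$. This yields
$$d_{\cW,\cW_0}(\hat h; h^*)\le \bar{d}_{\cW,w}(\hat h, h^*)/\nu+\epsilon.$$
Then I would plug in the bound from Theorem~\ref{thm:dist}, which upper bounds $\bar{d}_{\cW,w}(\hat h, h^*)$ with probability at least $1-\delta$ by the expression
$$\cO\!\left(\tfrac{LB_h}{(nk)^2}+\log(nk)\bigl[L\cdot \sO_{nk}(\cH)+\bar{\sO}_n(\cW)\bigr]+\sqrt{\tfrac{\log(2/\delta)}{nk}}\right).$$
Dividing by $\nu$ and adding $\epsilon$ gives exactly the stated bound.

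There is essentially no obstacle: the theorem is a direct two-line consequence of the diversity definition and Theorem~\ref{thm:dist}. The only care needed is (i) to make sure the assumptions of Theorem~\ref{thm:dist} (bounded and Lipschitz cross-entropy loss, $\|w\|\le L$, bounded representation $\|h(x)\|\le B_h$) are indeed in force here — they were already verified inside the proof of Theorem~\ref{thm:dist} — and (ii) to note that the high-probability event of Theorem~\ref{thm:dist} transfers verbatim to the final bound, since the diversity inequality is deterministic. Thus the proof amounts to composing the deterministic inequality with the high-probability one and taking the supremum over $w_0^*\in\cW_0$ to absorb the unseen task.
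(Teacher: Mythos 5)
Your proof is correct and matches the paper's intended argument exactly: the paper states the result as a direct consequence of the $(\nu,\epsilon)$-diversity definition and Theorem~\ref{thm:dist} without writing out the two-line chain, and your proposal simply fills in those lines (pass to $d_{\cW,\cW_0}(\hat h;h^*)$ via the supremum over $\cW_0$, apply diversity at $h'=\hat h\in\cH$, then substitute the high-probability bound on $\bar d_{\cW,w}(\hat h,h^*)$).
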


This theorem shows that how to use the abundant related data to facilitate learning $\hat h$. Thus, use the class $w^T\hat h$ for $w\in\cW$ is enough to capture the class $w^Th^*$ for $w\in\cW$.

\end{document}